\newtheorem{thm}{Theorem}[section] 
\newtheorem{cor}{Corollary}[section]
\newtheorem{lemma}{Lemma}[section]
\newtheorem{prop}{Proposition}[section]
\newtheorem{definition}{Definition}[section]
\newcommand{\bitem}{\begin{itemize}}
\newcommand{\eitem}{\end{itemize}}
\newcommand{\beqn}{\begin{equation}}
\newcommand{\eeqn}{\end{equation}}
\newcommand{\balign}{\begin{align}}
\newcommand{\ealign}{\end{align}}
\def\reals    { \mathbb{R} }
\def \manifold { {\mathcal{M}} } 
\def \manifoldN { {\mathcal{N}} } 
\def \condition { {\tau} }
\def \jacobian { \mathcal{J} }
\def \dim {N}      
\newcommand{\iprod}[2]{\left\langle #1 , #2 \right\rangle}
\title{\bf A Theoretical Analysis of Joint Manifolds}
\author{\em Mark A. Davenport, Chinmay Hegde, Marco F. Duarte, \\
\em and Richard G. Baraniuk\footnote{Department of Electrical and
Computer Engineering, Rice University, Houston, TX. Email: \{md,
chinmay, duarte, richb\}@rice.edu. \protect\\ \indent This work was
supported by grants NSF CCF-0431150 and CCF-0728867, DARPA/ONR
N66001-08-1-2065, ONR N00014-07-1-0936 and N00014-08-1-1112, AFOSR
FA9550-07-1-0301, ARO MURI W911NF-07-1-0185, and the TI Leadership
Program.
    \protect\\ \indent Web: dsp.rice.edu/cs} \protect\\\protect\\
    Rice University \protect\\ Department of Electrical and Computer Engineering\protect\\
    Technical Report TREE0901}
\begin{document}
\date{January 6, 2009}

\maketitle

\begin{abstract}
The emergence of low-cost sensor architectures for diverse
modalities has made it possible to deploy sensor arrays that capture
a single event from a large number of vantage points and using
multiple modalities.  In many scenarios, these sensors acquire very
high-dimensional data such as audio signals, images, and video.  To
cope with such high-dimensional data, we typically rely on
low-dimensional models. Manifold models provide a particularly
powerful model that captures the structure of high-dimensional data
when it is governed by a low-dimensional set of parameters. However,
these models do not typically take into account dependencies among
multiple sensors.  We thus propose a new {\em joint manifold}
framework for data ensembles that exploits such dependencies. We
show that simple algorithms can exploit the joint manifold structure
to improve their performance on standard signal processing
applications. Additionally, recent results concerning dimensionality
reduction for manifolds enable us to formulate a network-scalable
data compression scheme that uses random projections of the sensed
data. This scheme efficiently fuses the data from all sensors
through the addition of such projections, regardless of the data
modalities and dimensions.
\end{abstract}

\section{Introduction}
\label{sec:intro}

The geometric notion of a low-dimensional manifold is a common, yet
powerful, tool for modeling high-dimensional data.
Manifold models arise in cases where ({\em i}) a $K$-dimensional
parameter $\theta$ can be identified that carries the relevant
information about a signal and ({\em ii}) the signal $x_\theta \in
\reals^N$ changes as a continuous (typically nonlinear) function of
these parameters. Some typical examples include a one-dimensional
(1-D) signal shifted by an unknown time delay (parameterized by the
translation variable), a recording of a speech signal (parameterized by
the underlying phonemes spoken by the speaker), and an image of a
3-D object at an unknown location captured from an unknown viewing
angle (parameterized by the 3-D coordinates of the object and its roll,
pitch, and yaw).  In these and many other cases, the geometry of the
signal class forms a nonlinear $K$-dimensional manifold in
$\reals^N$,
\begin{equation} \label{eq:manifold_def}
\manifold = \{ f(\theta) : \theta \in \Theta \},
\end{equation}
where $\Theta$ is the $K$-dimensional parameter
space~\cite{DonohoGrimesISOMAP,GrimesThesis,WakinSPIEiam}.
Low-dimensional manifolds have also been proposed as approximate
models for nonparametric signal classes such as images of human
faces or handwritten digits~\cite{Eigenfaces,digits,broomhead01wh}.

In many scenarios, multiple observations of the same event may be
performed simultaneously, resulting in the acquisition of multiple
manifolds that share the same parameter space.  For example, sensor
networks --- such as camera networks or microphone arrays --- typically
observe a single event from a variety of vantage points, while the
underlying phenomenon can often be described by a set of common
global parameters (such as the location and orientation of the objects of
interest). Similarly, when sensing a single phenomenon using multiple
modalities, such as video and audio, the underlying phenomenon may
again be described by a single parameterization that spans all
modalities.  In such cases, we will show that it is
advantageous to model this joint structure contained in the ensemble
of manifolds as opposed to simply treating each manifold
independently. Thus we introduce the concept of the {\em joint
manifold}: a model for the concatenation of the data vectors observed
by the group of sensors. Joint manifolds enable the development of
improved manifold-based learning and estimation algorithms that
exploit this structure. Furthermore, they can be applied to data of
any modality and dimensionality.

In this work we conduct a careful examination of the theoretical
properties of joint manifolds. In particular, we compare joint
manifolds to their component manifolds to see how quantities like
geodesic distances, curvature, branch separation, and condition
number are affected.  We then observe that these properties lead to
improved performance and noise-tolerance for a variety of signal
processing algorithms when they exploit the joint manifold
structure, as opposed to processing data from each manifold
separately.  We also illustrate how this joint manifold structure
can be exploited through a simple and efficient data fusion
algorithm that uses random projections, which can also be applied to
multimodal data.

Related prior work has studied {\em manifold alignment}, where the
goal is to discover maps between several datasets that are governed
by the same underlying low-dimensional structure. Lafon et al.\ proposed an algorithm to obtain a one-to-one matching
between data points from several manifold-modeled
classes~\cite{lafon}. The algorithm first applies dimensionality
reduction using diffusion maps to obtain data representations that
encode the intrinsic geometry of the class. Then, an affine function
that matches a set of landmark points is computed and applied to the
remainder of the datasets. This concept was extended by Wang and
Mahadevan, who apply Procrustes analysis on the
dimensionality-reduced datasets to obtain an alignment function
between a pair of manifolds~\cite{procrustes}. Since an alignment
function is provided instead of a data point matching, the mapping
obtained is applicable for the entire manifold rather than for the
set of sampled points. In our setting, we assume that either ({\em i})
the manifold alignment is provided intrinsically via synchronization
between the different sensors or ({\em ii}) the manifolds have been
aligned using one of the approaches described above. Our main focus
is a theoretical analysis of the benefits provided by analyzing
the joint manifold versus solving our task of interest separately on
each of the manifolds observed by individual sensors.

This paper is organized as follows. Section~\ref{sec:jam} introduces
and establishes some basic properties of joint manifolds.
Section~\ref{sec:processing} considers the application of joint
manifolds to the tasks of classification and manifold learning.
Section~\ref{sec:defjam} then describes an efficient method for
processing and aggregating data when it lies on a joint manifold,
and Section~\ref{sec:conc} concludes with discussion.

\section{Joint manifolds}
\label{sec:jam}

In this section we develop a theoretical framework for ensembles of
manifolds which are {\em jointly} parameterized by a small number of
{\em common} degrees of freedom. Informally, we propose a data
structure for jointly modeling such ensembles; this is obtained by
concatenating points from different ensembles that are indexed by
the same articulation parameter to obtain a single point in a
higher-dimensional space. We begin by defining the joint manifold
for the general setting of arbitrary topological manifolds\footnote{A comprehensive introduction of topological manifolds can be found in Boothby~\cite{Boothby}.}.

\begin{definition}
Let $\manifold_1,\manifold_2,\ldots,\manifold_J$ be an ensemble of
$J$ topological manifolds of equal dimension $K$.  Suppose that the
manifolds are homeomorphic to each other, in which case there exists
a homeomorphism $\psi_{j}$ between $\manifold_1$ and $\manifold_j$
for each $j$.  For a particular set of mappings $\{ \psi_j
\}_{j=2}^J$, we define the {\bf joint manifold} as
$$
\manifold^* = \{(p_1,p_2,\ldots,p_J) \in \manifold_1 \times
\manifold_2 \times \cdots \times \manifold_J : p_j = \psi_j(p_1), 2
\le j \le J \}.
$$
Furthermore, we say that
$\manifold_1,\manifold_2,\ldots,\manifold_J$ are the corresponding
{\bf component manifolds}.
\end{definition}

Notice that $\manifold_1$ serves as a common {\em parameter space}
for all the component manifolds.  Since the component manifolds are
homeomorphic to each other, this choice is ultimately arbitrary.  In
practice it may be more natural to think of each component manifold
as being homeomorphic to some fixed $K-$dimensional parameter space $\Theta$.
However, in this case one could still define $\manifold^*$ as is
done above by defining $\psi_j$ as the composition of the
homeomorphic mappings from $\manifold_1$ to $\Theta$ and from
$\Theta$ to $\manifold_j$.

As an example, consider the one-dimensional manifolds in Figure
\ref{fig:helix}.  Figures \ref{fig:helix} (a) and (b) show two
isomorphic manifolds, where $\manifold_1 = (0, 2\pi)$ is an open
interval, and $\manifold_2 = \{ \psi_2(\theta) : \theta \in
\manifold_1 \}$ where $\psi_2(\theta) = (\cos(\theta),
\sin(\theta))$, i.e., $\manifold_2 = S^1 \backslash (1,0)$ is a
circle with one point removed (so that it remains isomorphic to a
line segment).  In this case the joint manifold $\manifold^* = \{
(\theta, \cos(\theta), \sin(\theta)) : \theta \in (0,2\pi) \}$,
illustrated in Figure \ref{fig:helix} (c), is a helix.  Notice that
there exist other possible homeomorphic mappings from $\manifold_1$
to $\manifold_2$, and that the precise structure of the joint
manifold as a submanifold of $\reals^3$ is heavily dependent on the
choice of this mapping.

\begin{figure}[!t]
\centering
\begin{tabular}{ccc}
{\includegraphics[width=0.3\hsize]{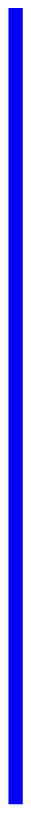}} &
{\includegraphics[width=0.3\hsize]{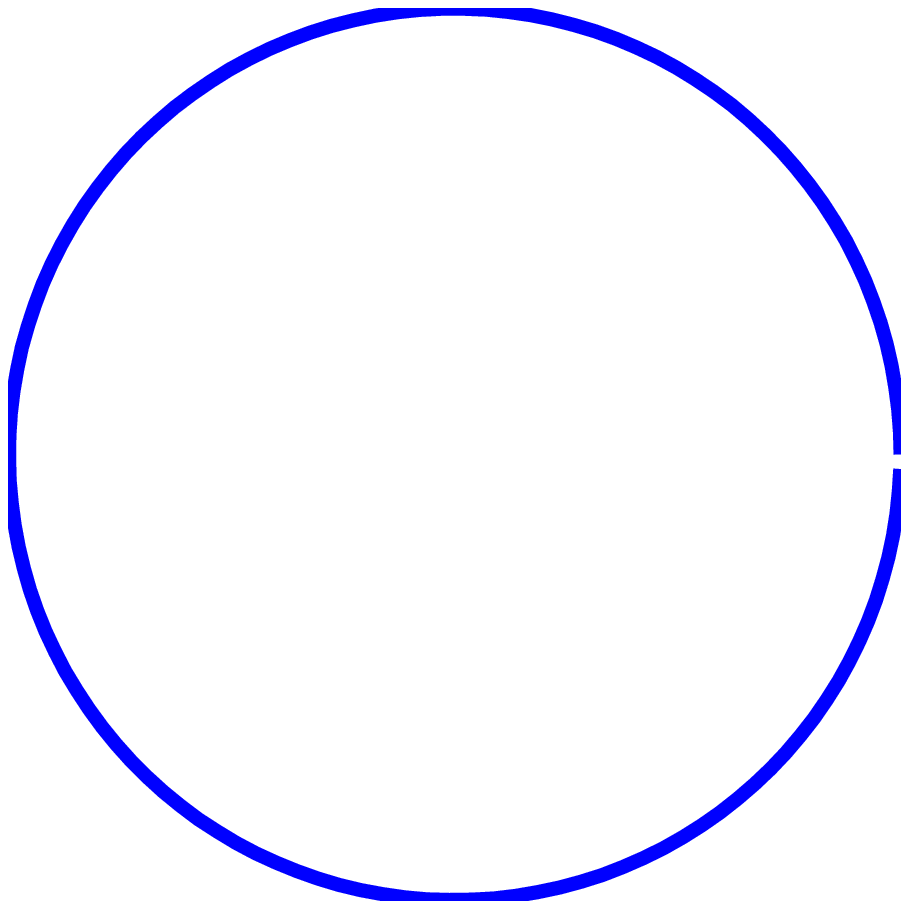}} &
{\includegraphics[width=0.4\hsize, viewport=50 60 400 300]{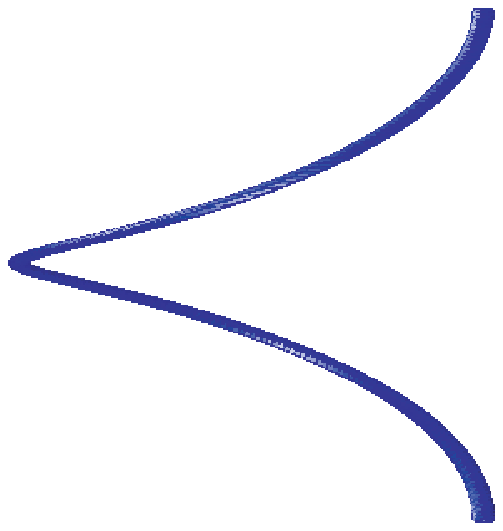}} \\
(a) $\manifold_1 \subseteq \reals$: line segment & (b) $\manifold_2 \subseteq \reals^2$:
circle segment & (c) $\manifold^* \subseteq \reals^3$: helix segment
\end{tabular}
\caption{\small\sl A pair of isomorphic manifolds $\manifold_1$ and
$\manifold_2$, and the resulting joint manifold $\manifold^*$.}
\label{fig:helix}
\end{figure}

Returning to the definition of $\manifold^*$, observe that although
we have called $\manifold^*$ the joint manifold, we have not shown
that it actually forms a topological manifold. To prove that
$\manifold^*$ is indeed a manifold, we will make use of the fact
that the joint manifold is a subset of the {\em product manifold}
$\manifold_1 \times \manifold_2 \times \cdots \times \manifold_J$.
One can show that the product manifold forms a $JK$-dimensional manifold
using the product topology~\cite{Boothby}. By comparison, we now
show that $\manifold^*$ has dimension only $K$.

\begin{prop} \label{prop:manifold}
$\manifold^*$ is a $K$-dimensional submanifold of $\manifold_1
\times \manifold_2 \times \cdots \times \manifold_J$.
\end{prop}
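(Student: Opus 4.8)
The plan is to exhibit $\manifold^*$ as the image of $\manifold_1$ under an embedding into the product manifold, and then invoke the standard fact that the image of a topological manifold under a homeomorphism-onto-its-image is again a manifold of the same dimension. Concretely, I would define the map $\Psi : \manifold_1 \to \manifold_1 \times \manifold_2 \times \cdots \times \manifold_J$ by $\Psi(p) = (p, \psi_2(p), \psi_3(p), \ldots, \psi_J(p))$. By the definition of $\manifold^*$, the image $\Psi(\manifold_1)$ is exactly $\manifold^*$, so it suffices to show that $\Psi$ is a homeomorphism from $\manifold_1$ (a $K$-dimensional topological manifold) onto $\manifold^*$, where $\manifold^*$ carries the subspace topology inherited from the product.

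The verification breaks into three routine pieces. First, $\Psi$ is continuous: each component map is either the identity on $\manifold_1$ or one of the homeomorphisms $\psi_j$, hence continuous, and a map into a product is continuous iff all its components are. Second, $\Psi$ is injective, since its first component is already injective (the identity). Third, the inverse $\Psi^{-1} : \manifold^* \to \manifold_1$ is continuous: it is simply the restriction to $\manifold^*$ of the projection $\pi_1$ onto the first factor $\manifold_1$, and projections are continuous in the product topology, so their restrictions to subspaces are continuous as well. Thus $\Psi$ is a homeomorphism onto its image $\manifold^*$, and so $\manifold^*$ is a topological manifold of dimension $K$. Being a subset of the product manifold with the subspace topology, it is by definition a ($K$-dimensional) submanifold.

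I do not anticipate a serious obstacle here; the argument is essentially the observation that $\manifold^*$ is a ``graph'' over $\manifold_1$ and graphs of continuous maps are homeomorphic to their domain. The one point that warrants a sentence of care is the claim that $\Psi$ is \emph{open onto its image} (equivalently, that $\Psi^{-1}$ is continuous) — this is where one must explicitly use that $\manifold^*$ is given the subspace topology from the product rather than some other topology, and it is cleanest to phrase it via the continuity of the projection $\pi_1|_{\manifold^*}$ rather than trying to check openness of $\Psi$ directly. If one wishes to be pedantic about the word ``submanifold'' (in the smooth category one would additionally want $\Psi$ to be a smooth embedding — an immersion that is a homeomorphism onto its image), one can remark that in the smooth setting the same formula works provided the $\psi_j$ are diffeomorphisms, since then $d\Psi$ has a block-identity first component and is therefore injective at every point; but at the level of topological manifolds, which is all Proposition~\ref{prop:manifold} asserts, the homeomorphism argument above suffices.
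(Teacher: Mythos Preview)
Your proposal is correct and takes a closely related but somewhat different route from the paper. The paper works locally: for each point $p \in \manifold^*$ it pulls back a chart $(U_1,\phi_1)$ from $\manifold_1$ and defines a chart $\phi^*$ on $U^* = (U_1 \times \cdots \times U_J)\cap \manifold^*$ by $\phi^*(q)=\phi_1(q_1)$, after separately noting that $\manifold^*$ inherits second-countability and the Hausdorff property from the ambient product. You instead work globally, exhibiting a single homeomorphism $\Psi:\manifold_1 \to \manifold^*$ with inverse $\pi_1|_{\manifold^*}$ and inheriting the entire manifold structure from $\manifold_1$ at once. The two arguments share the same core observation --- the paper's chart $\phi^*$ is literally $\phi_1 \circ (\pi_1|_{\manifold^*})$, i.e., your global inverse post-composed with a chart on $\manifold_1$ --- but your packaging is cleaner: it makes the ``graph over $\manifold_1$'' structure explicit, handles the Hausdorff and second-countability conditions automatically (they are topological invariants), and avoids having to verify that the locally defined $\phi^*$ is a homeomorphism. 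Your remark about the smooth case is also a useful addition that the paper does not make at this point.
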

\begin{proof}
We first observe that since $\manifold^*$ is a subset of the product
manifold, we automatically have that $\manifold^*$ is a second
countable Hausdorff topological space.  Thus, all that remains is to
show that $\manifold^*$ is locally homeomorphic to $\reals^K$.  Let
$p = (p_1, p_2, \ldots, p_J)$ be an arbitrary point on
$\manifold^*$.  Since $p_1 \in \manifold_1$, we have a pair $(U_1,
\phi_1)$ such that $U_1 \subset \manifold_1$ is an open set
containing $p_1$ and $\phi_1 : U_1 \rightarrow V$ is a homeomorphism
where $V$ is an open set in $\reals^K$.  We now define for $2 \le j \le J$ $U_j =
\psi_j(U_1)$ and $\phi_j = \phi_1 \circ \psi_j^{-1} : U_j
\rightarrow V$.  Note that for each $j$, $U_j$ is an open set and
$\phi_j$ is a homeomorphism (since $\psi_j$ is a homeomorphism).

Now define $U^* = (U_1 \times U_2 \times \cdots \times U_J) \cap
\manifold^*$.  Observe that $U^*$ is an open set and that $p \in
U^*$.  Furthermore, let $q = (q_1,q_2, \ldots, q_J)$ be any element
of $U^*$. Then $\phi_j(q_j) = \phi_1 \circ \psi_j^{-1}(q_j) = \phi_1
(q_1)$ for each $2 \le j \le J$. Thus, since the image of each $q_j
\in U_j$ in $V$ under their corresponding $\phi_j$ is the same, we
can form a single homeomorphism $\phi^*: U^* \to V$ by assigning
$\phi^*(q) = \phi_1(q_1)$. This shows that $\manifold^*$ is locally
homeomorphic to $\reals^K$ as desired.
\end{proof}

Since $\manifold^*$ is a submanifold of $\manifold_1 \times
\manifold_2 \times \cdots \times \manifold_J$, it also inherits some
desirable properties from its component manifolds.

\begin{prop}
Suppose that $\manifold_1, \manifold_2, \ldots \manifold_J$ are
isomorphic topological manifolds and $\manifold^*$ is defined as
above.
\begin{enumerate}
\item If $\manifold_1, \manifold_2, \ldots, \manifold_J$ are Riemannian,
then $\manifold^*$ is Riemannian.
\item If $\manifold_1, \manifold_2, \ldots, \manifold_J$ are compact,
then $\manifold^*$ is compact.
\end{enumerate}
\end{prop}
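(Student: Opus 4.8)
The plan is to realize $\manifold^*$ as a (smoothly) embedded copy of its parameter space inside the product manifold and to transfer the relevant structure from the product. Throughout I will work with the map
$$ \iota : \manifold_1 \longrightarrow \manifold_1 \times \manifold_2 \times \cdots \times \manifold_J, \qquad \iota(p_1) = (p_1, \psi_2(p_1), \ldots, \psi_J(p_1)), $$
which is continuous (each $\psi_j$ is a homeomorphism), injective, and has image exactly $\manifold^*$; composing with the projection $\pi_1$ onto the first factor shows that $\iota$ is a homeomorphism of $\manifold_1$ onto $\manifold^*$ with inverse $\pi_1|_{\manifold^*}$. In the Riemannian setting, where the $\psi_j$ are diffeomorphisms, each component of $\iota$ is smooth and $d\iota_{p_1}(v) = (v, d\psi_2(v), \ldots, d\psi_J(v))$ is injective (its first coordinate is $v$), so $\iota$ is a smooth embedding and $\manifold^*$ is a smooth embedded $K$-dimensional submanifold, consistent with Proposition~\ref{prop:manifold}.

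For part (1), equip the product with the product Riemannian metric $g = \pi_1^* g_1 \oplus \cdots \oplus \pi_J^* g_J$, where $g_j$ is the metric on $\manifold_j$; under the canonical identification $T_{(p_1,\ldots,p_J)}(\manifold_1 \times \cdots \times \manifold_J) \cong T_{p_1}\manifold_1 \oplus \cdots \oplus T_{p_J}\manifold_J$ this is the block-diagonal inner product, which is smooth and positive definite. Pulling $g$ back along the inclusion of $\manifold^*$ (equivalently, restricting $g$ to the subspaces $T_p\manifold^* \subseteq T_p(\manifold_1 \times \cdots \times \manifold_J)$) yields a symmetric bilinear tensor field on $\manifold^*$; it is smooth because $g$ and the embedding are smooth, and it remains positive definite because the restriction of a positive-definite form to a linear subspace is positive definite. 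Hence $\manifold^*$ carries an induced Riemannian metric. (Equivalently and even more directly, one may simply push $g_1$ forward along the diffeomorphism $\iota : \manifold_1 \to \manifold^*$.)

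For part (2), note that since the $\manifold_j$ are mutually homeomorphic, compactness of any one of them gives compactness of $\manifold_1$; then $\manifold^* = \iota(\manifold_1)$ is the continuous image of a compact space and is therefore compact. (Alternatively: the product is compact by the finite Tychonoff theorem, and $\manifold^*$ is closed in it — writing $h_j(p_1,\ldots,p_J) = (\psi_j(p_1), p_j)$, one has $\manifold^* = \bigcap_{j=2}^J h_j^{-1}(\Delta_{\manifold_j})$, a finite intersection of preimages of the diagonals $\Delta_{\manifold_j}$, each closed since $\manifold_j$ is Hausdorff; a closed subset of a compact space is compact.)

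The only real subtlety, and the step I would be most careful to state, is the reading of ``isomorphic'' in part (1): for the Riemannian conclusion to be meaningful one must work in the smooth category, so the $\psi_j$ should be taken to be diffeomorphisms — and then, just as with the extrinsic geometry in the helix example, the induced metric on $\manifold^*$ genuinely depends on this choice. Granting that, both parts are short, and the compactness statement in particular is immediate once one observes that $\manifold^*$ is a continuous image of (equivalently, homeomorphic to) $\manifold_1$.
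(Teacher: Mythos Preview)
Your proof is correct and follows the same route the paper sketches --- pass to the product manifold, which is Riemannian/compact, and then inherit the structure on $\manifold^*$ as a submanifold --- but you supply the details the paper omits (the explicit product metric and its pullback, and the verification that $\manifold^*$ is closed in the product), and you also give the cleaner alternatives (push forward $g_1$ along $\iota$; take $\manifold^*$ as a continuous image of compact $\manifold_1$). Your caveat that part~(1) requires the $\psi_j$ to be diffeomorphisms is well taken and worth stating explicitly, as the paper does not.
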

\begin{proof}
The proofs of these facts are straightforward and follow from the
fact that if the component manifolds are Riemannian or compact, then
the product manifold will be as well.  $\manifold^*$ then inherits
these properties as a submanifold of the product
manifold~\cite{Boothby}.
\end{proof}

Up to this point we have considered general topological manifolds.
In particular, we have {\em not} assumed that the component
manifolds are embedded in any particular space.  If each component
manifold $\manifold_j$ is embedded in $\reals^{N_j}$, the joint
manifold is naturally embedded in $\reals^{N^*}$ where $N^* =
\sum_{j=1}^J N_j$. Hence, the joint manifold can be viewed as a
model for data of {\em varying ambient dimension} linked by a common
parametrization.  In the sequel, we assume that each manifold
$\manifold_j$ is embedded in $\reals^N$, which implies that
$\manifold^* \subset \reals^{JN}$. Observe that while the intrinsic
dimension of the joint manifold remains constant at $K$, the ambient
dimension increases by a factor of $J$.  We now examine how a number
of geometric properties of the joint manifold compare to those of
the component manifolds.

We begin with the following simple observation that Euclidean
distances between points on the joint manifold are larger than
distances on the component manifolds.  In the remainder of this
paper, whenever we use the notation $\| \cdot \|$ we mean $\| \cdot
\|_{\ell_2}$, i.e., the $\ell_2$ (Euclidean) norm on $\reals^N$.
When we wish to differentiate this from other $\ell_p$ norms, we
will be explicit.

\begin{prop} \label{prop:euclidean}
Let $p = (p_1, p_2, \ldots, p_J)$ and $q = (q_1, q_2, \ldots, q_J)$
be two points on the joint manifold $\manifold^*$.  Then
$$
\|p-q\| = \sqrt{\sum_{j=1}^J \|p_j-q_j\|^2}.
$$
\end{prop}
\begin{proof}
This follows from the definition of the Euclidean norm:
$$
\|p-q\|^2 = \sum_{i=1}^{JN} (p(i)-q(i))^2 = \sum_{j=1}^J
\sum_{i=1}^N (p_j(i) - q_j(i))^2 = \sum_{j=1}^J \|p_j - q_j\|^2.
$$
\end{proof}

While Euclidean distances are important (especially when noise is
introduced), the natural measure of distance between a pair of
points on a Riemannian manifold is not Euclidean distance, but
rather the {\em geodesic distance}.  The geodesic distance between
points $p,q \in \manifold$ is defined as
\begin{equation} \label{eq:geodesic}
d_\manifold(p,q) = \inf \{ L(\gamma) : \gamma(0) = p, \gamma(1) = q
\},
\end{equation}
where $\gamma : [0,1] \rightarrow \manifold$ is a $C^1$-smooth curve
joining $p$ and $q$, and $L(\gamma)$ is the length of $\gamma$ as
measured by
\begin{equation}
L(\gamma) = \int_0^1 \|\dot{\gamma}(t)\| dt.
\end{equation}
In order to see how geodesic distances on $\manifold^*$ compare to
geodesic distances on the component manifolds, we will make use of
the following lemma.

\begin{lemma} \label{lemma:paths}
Suppose that $\manifold_1, \manifold_2, \ldots, \manifold_J$ are
Riemannian manifolds, and let $\gamma : [0,1] \rightarrow
\manifold^*$ be a $C^1$-smooth curve on the joint manifold.  Then we
can write $\gamma = (\gamma_1, \gamma_2, \ldots, \gamma_J)$ where
each $\gamma_j : [0,1] \rightarrow \manifold_j$ is a $C^1$-smooth
curve on $\manifold_j$, and
$$
\frac{1}{\sqrt{J}} \sum_{j=1}^J L(\gamma_j) \le L(\gamma) \le
\sum_{j=1}^J L(\gamma_j).
$$
\end{lemma}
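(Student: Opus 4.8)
The plan is to reduce everything to a pointwise (in $t$) statement about the velocity vectors, then integrate. First I would justify the decomposition $\gamma = (\gamma_1,\ldots,\gamma_J)$: since $\manifold^*$ is a submanifold of the product $\manifold_1\times\cdots\times\manifold_J$ (Proposition 2.1), the curve $\gamma$ composed with the $j$-th coordinate projection $\pi_j$ gives $\gamma_j = \pi_j\circ\gamma : [0,1]\to\manifold_j$; because $\pi_j$ is smooth and $\gamma$ is $C^1$, each $\gamma_j$ is $C^1$. (One should also note $\gamma_j = \psi_j\circ\gamma_1$ for $j\ge 2$, which is consistent with this.) Then, just as in Proposition 2.3, for each fixed $t$ the velocity vector satisfies $\dot\gamma(t) = (\dot\gamma_1(t),\ldots,\dot\gamma_J(t))$ and hence, by the Euclidean-norm computation of Proposition 2.3 applied to tangent vectors,
$$
\|\dot\gamma(t)\| = \sqrt{\sum_{j=1}^J \|\dot\gamma_j(t)\|^2}.
$$

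Next I would get the two bounds on $\|\dot\gamma(t)\|$ from elementary inequalities relating the $\ell_2$ and $\ell_1$ norms of the vector $(\|\dot\gamma_1(t)\|,\ldots,\|\dot\gamma_J(t)\|)\in\reals^J$. The upper bound is $\sqrt{\sum_j a_j^2}\le \sum_j a_j$ for nonnegative $a_j$ (equivalently $\|v\|_2\le\|v\|_1$), and the lower bound is $\sum_j a_j \le \sqrt{J}\,\sqrt{\sum_j a_j^2}$ (Cauchy--Schwarz, i.e. $\|v\|_1\le\sqrt{J}\|v\|_2$). Thus pointwise
$$
\frac{1}{\sqrt J}\sum_{j=1}^J \|\dot\gamma_j(t)\| \;\le\; \|\dot\gamma(t)\| \;\le\; \sum_{j=1}^J \|\dot\gamma_j(t)\|.
$$
Finally I would integrate this chain of inequalities over $t\in[0,1]$; since integration is linear and monotone, and $L(\gamma_j) = \int_0^1\|\dot\gamma_j(t)\|\,dt$, this yields exactly
$$
\frac{1}{\sqrt J}\sum_{j=1}^J L(\gamma_j) \;\le\; L(\gamma) \;\le\; \sum_{j=1}^J L(\gamma_j),
$$
as claimed.

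There is no real obstacle here; the argument is essentially a norm-inequality fact wrapped around the coordinate decomposition of a curve. The only point deserving a moment's care is the regularity bookkeeping --- confirming that the coordinate curves $\gamma_j$ are genuinely $C^1$ (so that $L(\gamma_j)$ is well-defined as written) and that differentiation commutes with the coordinate projections, i.e.\ that $\dot\gamma(t)$ really is the concatenation $(\dot\gamma_1(t),\ldots,\dot\gamma_J(t))$ under the natural identification of the tangent space of the product with the product of tangent spaces. Once that identification is in place, Proposition 2.3's norm identity transfers verbatim from points to tangent vectors and the rest is the two-line $\ell_1$/$\ell_2$ comparison followed by integration.
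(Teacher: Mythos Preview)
Your proposal is correct and follows essentially the same approach as the paper: write $\|\dot\gamma(t)\|$ as the $\ell_2$ norm of the vector $(\|\dot\gamma_1(t)\|,\ldots,\|\dot\gamma_J(t)\|)$, apply the standard $\ell_1$/$\ell_2$ norm comparison pointwise in $t$, and integrate. If anything, you are slightly more careful than the paper about the regularity bookkeeping (justifying that the $\gamma_j$ are $C^1$ and that $\dot\gamma$ decomposes coordinatewise), which the paper takes for granted.
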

\begin{proof}
We begin by observing that
\begin{equation} \label{eq:paths1}
L(\gamma) = \int_0^1 \|\dot{\gamma}(t)\| dt = \int_0^1 \sqrt{
\sum_{j=1}^J \|\dot{\gamma_j}(t)\|^2} \ dt.
\end{equation}
For a fixed $t$, let $x_j = \|\dot{\gamma_j}(t)\|$, and observe that
$(x_1, x_2, \ldots, x_J)$ is a vector in $\reals^J$.  Thus we may
apply the standard norm inequalities
\begin{equation} \label{eq:normiq}
\frac{1}{\sqrt{J}} \|x\|_{\ell_1} \le \|x\|_{\ell_2} \le
\|x\|_{\ell_1}
\end{equation}
to obtain
\begin{equation} \label{eq:paths2}
\frac{1}{\sqrt{J}} \sum_{j=1}^J \|\dot{\gamma_j}(t)\| \le \sqrt{
\sum_{j=1}^J \|\dot{\gamma_j}(t)\|^2} \le \sum_{j=1}^J
\|\dot{\gamma_j}(t)\|.
\end{equation}
Combining the right-hand side of (\ref{eq:paths2}) with
(\ref{eq:paths1}) we obtain
$$
L(\gamma) \le \int_0^1 \sum_{j=1}^J \|\dot{\gamma_j}(t)\| dt =
 \sum_{j=1}^J \int_0^1 \|\dot{\gamma_j}(t)\| dt = \sum_{j=1}^J
 L(\gamma_j).
$$
Similarly, from the left-hand side of (\ref{eq:paths2}) we obtain
$$
L(\gamma) \ge \int_0^1 \frac{1}{\sqrt{J}} \sum_{j=1}^J
\|\dot{\gamma_j}(t)\| dt = \frac{1}{\sqrt{J}}
 \sum_{j=1}^J \int_0^1 \|\dot{\gamma_j}(t)\| dt = \frac{1}{\sqrt{J}} \sum_{j=1}^J
 L(\gamma_j).
$$
\end{proof}

We are now in a position to compare geodesic distances on
$\manifold^*$ to those on the component manifold.

\begin{thm} \label{thm:geod}
Suppose that $\manifold_1, \manifold_2, \ldots, \manifold_J$ are
Riemannian manifolds. Let $p = (p_1, p_2, \ldots, p_J)$ and $q =
(q_1, q_2, \ldots, q_J)$ be two points on the corresponding joint manifold
$\manifold^*$.  Then
\begin{equation} \label{eq:geod1}
d_{\manifold^*}(p,q) \ge \frac{1}{\sqrt{J}} \sum_{j=1}^J
d_{\manifold_j}(p_j,q_j).
\end{equation}
If the mappings $\psi_2, \psi_3, \ldots, \psi_J$ are {\em
isometries}, i.e., $d_{\manifold_1}(p_1,q_1) =
d_{\manifold_j}(\psi_j(p_1),\psi_j(q_1))$ for any $j$ and for any
pair of points ($p,q$), then
\begin{equation} \label{eq:geod2}
d_{\manifold^*}(p,q) = \frac{1}{\sqrt{J}} \sum_{j=1}^J
d_{\manifold_j}(p_j,q_j) = \sqrt{J} \cdot d_{\manifold_1}(p_1,q_1).
\end{equation}
\end{thm}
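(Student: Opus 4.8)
The plan is to derive both inequalities in \eqref{eq:geod1} and the equality chain in \eqref{eq:geod2} directly from Lemma~\ref{lemma:paths}, combining it with the definition of geodesic distance as an infimum over connecting curves. The key observation is that the projection maps and the lifting construction used in Proposition~\ref{prop:manifold} let us move back and forth between curves on $\manifold^*$ and tuples of curves on the component manifolds in a length-controlled way.

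First I would establish \eqref{eq:geod1}. Let $\gamma:[0,1]\to\manifold^*$ be any $C^1$-smooth curve joining $p$ and $q$. By Lemma~\ref{lemma:paths} we may write $\gamma = (\gamma_1,\ldots,\gamma_J)$ with each $\gamma_j$ a $C^1$-smooth curve on $\manifold_j$ joining $p_j$ to $q_j$, and $L(\gamma) \ge \frac{1}{\sqrt{J}}\sum_{j=1}^J L(\gamma_j)$. Since each $\gamma_j$ connects $p_j$ to $q_j$, we have $L(\gamma_j) \ge d_{\manifold_j}(p_j,q_j)$ by definition of the geodesic distance. Hence $L(\gamma) \ge \frac{1}{\sqrt{J}}\sum_{j=1}^J d_{\manifold_j}(p_j,q_j)$ for every such $\gamma$, and taking the infimum over $\gamma$ yields \eqref{eq:geod1}.

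Next, assume the $\psi_j$ are isometries. Then $d_{\manifold_j}(p_j,q_j) = d_{\manifold_1}(p_1,q_1)$ for all $j$, so the right-hand side of \eqref{eq:geod1} equals $\sqrt{J}\cdot d_{\manifold_1}(p_1,q_1)$; this already gives the lower bound $d_{\manifold^*}(p,q) \ge \sqrt{J}\cdot d_{\manifold_1}(p_1,q_1)$. For the matching upper bound, I would take a curve $\gamma_1:[0,1]\to\manifold_1$ joining $p_1$ to $q_1$ with $L(\gamma_1)$ arbitrarily close to $d_{\manifold_1}(p_1,q_1)$, lift it to the joint manifold by setting $\gamma = (\gamma_1, \psi_2\circ\gamma_1, \ldots, \psi_J\circ\gamma_1)$, and check that $\gamma$ is a $C^1$-smooth curve on $\manifold^*$ joining $p$ to $q$. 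Because each $\psi_j$ is an isometry, $L(\psi_j\circ\gamma_1) = L(\gamma_1)$, so by the right-hand inequality of Lemma~\ref{lemma:paths}, $L(\gamma) \le \sum_{j=1}^J L(\gamma_j) = J\cdot L(\gamma_1)$; but in fact one can compute directly from \eqref{eq:paths1} that $L(\gamma) = \sqrt{\sum_j L(\gamma_1)^2\,(\text{pointwise})} = \sqrt{J}\,L(\gamma_1)$ since all the speed functions coincide. Letting $L(\gamma_1) \to d_{\manifold_1}(p_1,q_1)$ gives $d_{\manifold^*}(p,q) \le \sqrt{J}\cdot d_{\manifold_1}(p_1,q_1)$, closing the equality.

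I expect the main subtlety to be the regularity and admissibility of the lifted curve $\gamma = (\gamma_1,\psi_2\circ\gamma_1,\ldots,\psi_J\circ\gamma_1)$: one must ensure it genuinely lies in $\manifold^*$ (immediate from the definition of $\manifold^*$ and of the $\psi_j$) and is $C^1$-smooth as a curve in the ambient space (which requires that the homeomorphisms $\psi_j$ are in fact sufficiently smooth — here the isometry hypothesis, interpreted in the Riemannian sense, should supply this). A minor point is that when the infimum in the geodesic distance is not attained, one works with near-minimizing curves and passes to the limit, which is harmless since all the length relations above are exact equalities or inequalities that survive the limit. Everything else is a direct application of Lemma~\ref{lemma:paths} and the norm inequalities already in hand.
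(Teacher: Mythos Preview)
Your argument for \eqref{eq:geod1} is essentially identical to the paper's: apply the lower bound of Lemma~\ref{lemma:paths} to a connecting curve (the paper takes a geodesic directly, you take an arbitrary curve and pass to the infimum---your version is marginally more careful, since it does not assume a minimizing geodesic exists).

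For \eqref{eq:geod2} you and the paper diverge. The paper works ``top-down'': it takes the geodesic $\gamma$ on $\manifold^*$, observes that under the isometry hypothesis the pointwise speeds $\|\dot\gamma_j(t)\|$ all coincide so the lower bound of Lemma~\ref{lemma:paths} is attained with equality, and then argues that the projection $\gamma_1$ must itself be a geodesic on $\manifold_1$ (else one could shorten $\gamma$). You instead work ``bottom-up'': lift a near-minimizing curve $\gamma_1$ on $\manifold_1$ to $\manifold^*$ via the $\psi_j$, note that the isometry forces all component speeds to agree, and read off $L(\gamma)=\sqrt{J}\,L(\gamma_1)$ directly. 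Both routes are correct and rest on the same underlying fact (Riemannian isometries preserve speeds pointwise). Your construction is slightly more explicit and again avoids assuming a geodesic on $\manifold^*$ exists; the paper's version is terser but implicitly uses that same smoothness of the $\psi_j$ when it asserts the equality case of the norm inequality is hit. You have correctly flagged the one real subtlety---that the isometry must be read in the Riemannian (not merely metric) sense so that $\psi_j\circ\gamma_1$ is $C^1$ with the right speed---and on Riemannian manifolds this is supplied by Myers--Steenrod.
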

\begin{proof}
If $\gamma$ is a geodesic path between
$p$ and $q$, then from Lemma \ref{lemma:paths},
$$
d_{\manifold^*}(p,q) = L(\gamma) \ge \frac{1}{\sqrt{J}} \sum_{j=1}^J
L(\gamma_j).
$$
By definition $L(\gamma_j) \ge d_{\manifold_j}(p_j,
q_j)$; hence, this establishes (\ref{eq:geod1}).

Now observe that lower bound in Lemma \ref{lemma:paths} is derived
from the lower inequality of (\ref{eq:normiq}).  This inequality is
attained with equality if and only if each term in the sum is equal,
i.e., $L(\gamma_j) = L(\gamma_k)$ for all $j$ and $k$. This is
precisely the case when $\psi_2, \psi_3, \ldots, \psi_J$ are
isometries.  Thus we obtain
$$
d_{\manifold^*}(p,q) = L(\gamma) = \frac{1}{\sqrt{J}} \sum_{j=1}^J
L(\gamma_j) = \sqrt{J} L(\gamma_1).
$$
We now conclude that $L(\gamma_1) = d_{\manifold_1}(p_1,q_1)$ since
if we could obtain a shorter path $\tilde{\gamma_1}$ from $p_1$ to
$q_1$ this would contradict the assumption that $\gamma$ is a
geodesic on $\manifold^*$, which establishes (\ref{eq:geod2}).
\end{proof}

Next, we study local smoothness and global self avoidance properties
of the joint manifold using the notion of {\em condition number}.

\begin{definition}{\em \cite{niyogi}}
Let $\manifold$ be a Riemannian submanifold of $\reals^\dim$. The
{\bf condition number} is defined as $1/\condition$, where
$\condition$ is the largest number satisfying the following: the
open normal bundle about $\manifold$ of radius $r$ is embedded in
$\reals^\dim$ for all $r < \condition$.
\end{definition}

The condition number of a given manifold controls both local
smoothness properties and global properties of the manifold.
Intuitively, as $1/\tau$ becomes smaller, the manifold becomes
smoother and more self-avoiding.  This is made more precise in the
following lemmata.

\begin{lemma}{\em \cite{niyogi}}
Suppose $\manifold$ has condition number $1/\tau$.  Let $p,q \in
\manifold$ be two distinct points on $\manifold$, and let
$\gamma(t)$ denote a unit speed parameterization of the geodesic
path joining $p$ and $q$.  Then
$$
\max_t \|\ddot{\gamma}(t)\| \le \frac{1}{\tau}.
$$
\end{lemma}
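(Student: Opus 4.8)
\medskip\noindent\textbf{Proof proposal.} The plan is to argue by contradiction straight from the definition of the condition number, via an osculating-ball argument. The one external ingredient I would invoke is the standard fact from Riemannian geometry that for a submanifold $\manifold\subset\reals^\dim$ and a unit-speed geodesic $\gamma$, the acceleration $\ddot\gamma(t)$ is at every $t$ normal to $\manifold$ at $\gamma(t)$: the geodesic equation asserts exactly that the component of $\ddot\gamma$ tangent to $\manifold$ vanishes, and since $\|\dot\gamma\|\equiv 1$ we also get $\iprod{\ddot\gamma(t)}{\dot\gamma(t)}=0$. So suppose, for contradiction, that $\kappa:=\|\ddot\gamma(t_0)\|>1/\tau$ for some parameter $t_0$; write $p_0=\gamma(t_0)$, let $\eta=\ddot\gamma(t_0)/\kappa$ (a unit normal to $\manifold$ at $p_0$), fix a radius $r$ with $1/\kappa<r<\tau$, and let $c=p_0+r\eta$ be the center of the corresponding osculating ball.

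The key geometric step, which I expect to be the main obstacle, is to deduce from the definition the ``empty-ball'' property: because $r<\tau$, the open ball $B(c,r)$ meets $\manifold$ nowhere. Here is how I would get it. If some $q\in\manifold$ lay in $B(c,r)$, then $\mathrm{dist}(c,\manifold)<r<\tau$; assuming $\manifold$ is closed in $\reals^\dim$ (as in the setting of interest), this distance is attained at some $q^\star\in\manifold$, at which $c-q^\star$ must be normal to $\manifold$ (it is a critical point of $\|c-\cdot\|^2$ on $\manifold$) with $\|c-q^\star\|<r$. Now choose an intermediate radius $r'\in(r,\tau)$: both $(p_0,\,r\eta)$ and $(q^\star,\,c-q^\star)$ are points of the open normal bundle of radius $r'$, and both are sent to $c$ by $(p,v)\mapsto p+v$. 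Since that normal bundle is embedded (hence that map injective) for every radius below $\tau$, we conclude $q^\star=p_0$ and $c-q^\star=r\eta$, forcing $\|c-q^\star\|=r$ and contradicting $\|c-q^\star\|<r$. Hence $B(c,r)\cap\manifold=\emptyset$. The delicate points in this step are purely bookkeeping --- open versus closed balls, and introducing the auxiliary radius $r'$ so that both fibers have length strictly below $r'$.

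To close, I would play the curvature against the empty ball. Set $\phi(t)=\|\gamma(t)-c\|^2$. Then $\phi(t_0)=r^2$, and $\phi'(t_0)=2\iprod{\gamma(t_0)-c}{\dot\gamma(t_0)}=-2r\iprod{\eta}{\dot\gamma(t_0)}=0$ because $\eta$ is normal while $\dot\gamma(t_0)$ is tangent. Differentiating once more and using $\|\dot\gamma(t_0)\|=1$ and $\ddot\gamma(t_0)=\kappa\eta$,
\[
\phi''(t_0)=2\|\dot\gamma(t_0)\|^2+2\iprod{\gamma(t_0)-c}{\ddot\gamma(t_0)}=2-2r\kappa<0 ,
\]
since $r\kappa>1$. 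Thus $t_0$ is a strict local maximum of $\phi$, so $\phi(t)<r^2$ --- i.e.\ $\gamma(t)\in B(c,r)\cap\manifold$ --- for all $t\ne t_0$ sufficiently near $t_0$, contradicting the empty-ball property. Therefore no such $t_0$ exists, $\|\ddot\gamma(t)\|\le 1/\tau$ for every $t$, and the bound on $\max_t\|\ddot\gamma(t)\|$ follows.
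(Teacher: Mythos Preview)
The paper does not actually prove this lemma: it is quoted verbatim from Niyogi--Smale--Weinberger \cite{niyogi} and stated without argument, so there is no ``paper's own proof'' to compare against.

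That said, your argument is correct and is essentially the standard one in the reach/condition-number literature. The two ingredients --- that $\ddot\gamma$ is normal because $\gamma$ is a unit-speed geodesic, and that the embedded tubular neighborhood forces the osculating ball of radius $r<\tau$ to be empty of manifold points --- are exactly what is needed, and your second-derivative test on $\phi(t)=\|\gamma(t)-c\|^2$ cleanly produces the contradiction. The one assumption you flag, that $\manifold$ is closed in $\reals^\dim$ so the nearest point $q^\star$ exists, is harmless here: in the Niyogi--Smale--Weinberger setting $\manifold$ is compact, and throughout this paper the component manifolds (and hence $\manifold^*$) are taken to be compact when the condition number is invoked. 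Your introduction of the auxiliary radius $r'\in(r,\tau)$ to fit both $(p_0,r\eta)$ and $(q^\star,c-q^\star)$ into a common \emph{open} normal bundle is the right bookkeeping move and handles the open-versus-closed subtlety cleanly.
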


\begin{lemma}{\em \cite{niyogi}} \label{lem:cond_geodesics}
Suppose $\manifold$ has condition number $1/\tau$.  Let $p,q \in
\manifold$ be two points on $\manifold$ such that $\|p-q\| = d.$  If
$d \le \tau/2$, then the geodesic distance $d_\manifold(p,q)$ is
bounded by
$$
d_\manifold(p,q) \le \tau (1 - \sqrt{1 - 2d/\tau}).
$$
\end{lemma}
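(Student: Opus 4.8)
The plan is to parametrize the minimizing geodesic from $p$ to $q$ by arc length and to use the second-derivative (curvature) bound from the preceding lemma to show that the geodesic cannot turn away from its initial direction too quickly, so that its length cannot greatly exceed the chord length $d$.

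Set-up: let $\gamma : [0,L] \to \manifold$ be a unit-speed minimizing geodesic with $\gamma(0)=p$, $\gamma(L)=q$, so that $L = d_\manifold(p,q)$ and $\|\dot\gamma\| \equiv 1$; by the preceding lemma $\|\ddot\gamma(t)\| \le 1/\tau$ for all $t \in [0,L]$. The first step is the one-sided estimate
$$
\langle \dot\gamma(t),\dot\gamma(0)\rangle \ \ge\ 1-\frac{t}{\tau}, \qquad 0 \le t \le L,
$$
which I would obtain by writing $\langle \dot\gamma(t),\dot\gamma(0)\rangle = 1 + \int_0^t \langle \ddot\gamma(s),\dot\gamma(0)\rangle\,ds$ and bounding the integrand below by $-\|\ddot\gamma(s)\| \ge -1/\tau$ via Cauchy--Schwarz (using $\|\dot\gamma(0)\|=1$).

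The second step turns this into a bound on $L$. Since $q-p = \int_0^L \dot\gamma(t)\,dt$, Cauchy--Schwarz against the unit vector $\dot\gamma(0)$ gives
$$
d = \|q-p\| \ \ge\ \langle q-p,\dot\gamma(0)\rangle = \int_0^L \langle \dot\gamma(t),\dot\gamma(0)\rangle\,dt \ \ge\ \int_0^L\Bigl(1-\frac{t}{\tau}\Bigr)dt = L - \frac{L^2}{2\tau},
$$
i.e.\ $L^2 - 2\tau L + 2\tau d \ge 0$. The roots $\tau\bigl(1\pm\sqrt{1-2d/\tau}\bigr)$ are real precisely because the hypothesis $d \le \tau/2$ makes the discriminant nonnegative, so either $L \le \tau\bigl(1-\sqrt{1-2d/\tau}\bigr)$ (the claim) or $L \ge \tau\bigl(1+\sqrt{1-2d/\tau}\bigr) \ge \tau$.

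The hard part will be excluding this second branch, since \emph{a priori} the estimate is equally consistent with a long, tightly curving geodesic. I would handle it by a continuity/connectedness argument: the identical inequality applies verbatim to every initial sub-arc $\gamma|_{[0,u]}$ --- itself a minimizing geodesic of length $u$ joining $p$ to $\gamma(u)$ --- so each $u$ obeys the same dichotomy with $d$ replaced by the chord distance $\|p-\gamma(u)\|$; since $u$ and $\|p-\gamma(u)\|$ depend continuously on $u$ (the latter is $1$-Lipschitz), and the pair $\bigl(u,\|p-\gamma(u)\|\bigr)$ lies on the lower branch as $u \to 0^+$, it cannot jump across the gap between the two branches. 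Pushing this through --- while checking that the relevant sub-arc chord distances stay $\le \tau/2$ so that the branches remain genuinely separated --- forces $L \le \tau\bigl(1-\sqrt{1-2d/\tau}\bigr)$, as desired.
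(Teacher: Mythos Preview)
The paper does not prove this lemma; it is quoted from Niyogi--Smale--Weinberger, so there is no in-paper argument to compare against. Your first two steps --- integrating the curvature bound to obtain $d \ge L - L^{2}/(2\tau)$ and the resulting quadratic dichotomy --- are correct and standard.

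The gap is in excluding the upper branch. Your plan uses only the second-derivative bound $\|\ddot\gamma\|\le 1/\tau$ from the preceding lemma, and that input alone is too weak to force the conclusion. Consider a tightly-wound circular helix of radius $R$ and very small pitch: as a unit-speed curve it satisfies $\|\ddot\gamma\|\approx 1/R$, so the curvature estimate holds with ``$\tau=R$''; yet points on adjacent coils have Euclidean separation $O(\epsilon)\ll R/2$ while their arc-length separation is $\approx 2\pi R$, landing squarely on the upper branch and violating the claimed bound. The lemma itself is not contradicted because the \emph{true} reach of that helix is $O(\epsilon)$, not $R$ --- the condition number encodes global self-avoidance in addition to local curvature, and it is the self-avoidance that rules out the upper branch.

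You correctly flag the delicate point (``checking that the relevant sub-arc chord distances stay $\le \tau/2$''), but this check cannot be completed from the curvature bound. Your own inequality gives $d_{u}\ge u-u^{2}/(2\tau)$, so at $u=\tau$ one has $d_{\tau}\ge \tau/2$: the two roots coalesce there, and for $u>\tau$ the curve $(u,d_{u})$ is free to pass through the region $d_{u}>\tau/2$ (where the dichotomy is vacuous) and re-enter on the upper branch --- the helix does exactly this. To close the argument you must invoke the reach hypothesis directly, for instance via the tubular-neighborhood embedding or the consequence that every $q\in\manifold$ lies within $\|p-q\|^{2}/(2\tau)$ of the affine tangent space at $p$; the curvature corollary by itself is not enough.
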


We wish to show that if the component manifolds are smooth and self
avoiding, the joint manifold is as well. It is not easy to prove
this in the most general case, where the only assumption is that
there exists a homeomorphism (i.e., a continuous bijective map
$\psi$) between every pair of manifolds. However, suppose the
manifolds are {\em diffeomorphic}, i.e., there exists a continuous
bijective map between tangent spaces at corresponding points on
every pair of manifolds. In that case, we make the following assertion.
\begin{thm} \label{thm:cond_jam}
Suppose that $\manifold_1, \manifold_2, \ldots, \manifold_J$ are
Riemannian submanifolds of $\reals^N$, and let $1/\tau_j$ denote the
condition number of $\manifold_j$. Suppose also that the $\psi_2,
\psi_3, \ldots, \psi_J$ that define the corresponding joint manifold
$\manifold^*$ are diffeomorphisms.  If $1/\tau^*$ is the condition
number of $\manifold^*$, then
$$
\tau^* \ge \mathop{\min}_{1\le j \le J} \tau_j,
$$
or equivalently,
$$
\frac{1}{\tau^*} \le \mathop{\max}_{1 \le j \le J} \frac{1}{\tau_j}.
$$
\end{thm}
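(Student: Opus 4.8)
The plan is to recall the characterization of the condition number $1/\tau$ in terms of a lower bound on the radius of curvature of geodesics together with a global self-avoidance (non-self-intersection) property, and to verify both ingredients for $\manifold^*$ using the corresponding facts for the $\manifold_j$'s. Concretely, Niyogi et al.\ show that $1/\tau = \sup\{ \|\ddot\gamma(t)\| \}$ over all unit-speed geodesics $\gamma$ plus a condition preventing the normal bundle from self-intersecting at radius $< \tau$; equivalently, one can use the two-point formulation: $1/\tau$ is the smallest number such that for all $p,q \in \manifold$, the sine of the angle between the chord $p-q$ and the tangent space $T_p\manifold$ is at most $\|p-q\|/(2\tau)$ (this is essentially Niyogi et al., Lemma/Prop.\ on condition number). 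I would use whichever of these two equivalent formulations makes the bookkeeping cleanest; the chordal/tangent-angle one seems most convenient because it refers only to Euclidean quantities and tangent spaces, both of which decompose cleanly across the product.

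The key steps, in order: (1) Note that since the $\psi_j$ are diffeomorphisms, $\manifold^*$ is a $C^1$ (indeed smooth) submanifold of $\reals^{JN}$, so it genuinely has a well-defined condition number; the tangent space at $p=(p_1,\dots,p_J)$ is $T_p\manifold^* = \{(v_1, D\psi_2(v_1), \dots, D\psi_J(v_1)) : v_1 \in T_{p_1}\manifold_1\}$, a graph over $T_{p_1}\manifold_1$. (2) Take two points $p,q \in \manifold^*$ and write $p-q = (p_1-q_1, \dots, p_J-q_J)$; by Proposition~\ref{prop:euclidean}, $\|p-q\|^2 = \sum_j \|p_j-q_j\|^2$. (3) Decompose each chord $p_j - q_j = u_j + w_j$ into its component $u_j \in T_{p_j}\manifold_j$ and its normal component $w_j \perp T_{p_j}\manifold_j$; the condition number bound for $\manifold_j$ gives $\|w_j\| \le \|p_j-q_j\|^2/(2\tau_j)$. (4) Decompose the chord $p-q$ in $\reals^{JN}$ relative to $T_p\manifold^*$ and its normal space, and show the normal part has norm at most $\sqrt{\sum_j \|w_j\|^2}$ — here I expect to need that the normal space of $\manifold^*$ at $p$, intersected with each $\reals^N$-factor, contains the normal space of $\manifold_j$ at $p_j$ is \emph{not} quite true, so instead I'd argue: the distance from $p-q$ to $T_p\manifold^*$ is the infimum over $v_1 \in T_{p_1}\manifold_1$ of $\|(p_1-q_1 - v_1, \dots, p_J-q_J - D\psi_J(v_1))\|$, and by choosing $v_1 = 0$ (or a suitable near-optimal $v_1$) and using the per-component bounds, this infimum is $\le \sqrt{\sum_j \|w_j\|^2}$. (5) Chain the estimates: the normal part of $p-q$ w.r.t.\ $\manifold^*$ has squared norm $\le \sum_j \|w_j\|^2 \le \sum_j \|p_j-q_j\|^4/(4\tau_j^2) \le \big(\max_j 1/(4\tau_j^2)\big)\big(\sum_j \|p_j-q_j\|^2\big)^2 = \|p-q\|^4/(4\min_j \tau_j^2)$, using $\sum a_j^2 \le (\sum a_j)^2$ for nonnegative $a_j$ with $a_j = \|p_j-q_j\|^2$. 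Taking square roots gives the tangent-angle bound for $\manifold^*$ with parameter $\min_j \tau_j$, hence $\tau^* \ge \min_j \tau_j$.

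The main obstacle I anticipate is step (4): the normal space of the joint manifold is \emph{not} the product of the normal spaces of the components (the joint tangent space is a diagonal-type subspace of the product tangent space, so its orthogonal complement is strictly larger than $\prod_j (T_{p_j}\manifold_j)^\perp$). The clean way around this is precisely the observation that the distance from the chord to $T_p\manifold^*$ is an infimum over a \emph{single} parameter $v_1$, so any admissible choice — in particular $v_1=0$, giving $\sum_j\|p_j-q_j\|^2$, or better, a choice matched to the tangential parts — furnishes an upper bound; we only need an upper bound on the normal component, so we are free to be suboptimal. A secondary technical point is confirming that the two-point / tangent-angle quantity really does coincide with (or lower-bound) $\tau$ as defined via the normal bundle; I would cite the relevant lemma of Niyogi et al.\ for this equivalence rather than reprove it, consistent with the way the condition-number lemmata are quoted earlier in this section.
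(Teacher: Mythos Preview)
Your step (4) does not go through: the inequality you assert there is the reverse of what actually holds. For any $v_1 \in T_{p_1}\manifold_1$ the vector $D\psi_j(v_1)$ lies in $T_{p_j}\manifold_j$, so $\|p_j - q_j - D\psi_j(v_1)\|^2 \ge \|w_j\|^2$, the latter being the squared distance from $p_j - q_j$ to $T_{p_j}\manifold_j$. Summing over $j$ and taking the infimum over $v_1$ yields
\[
\|w^*\|^2 \;=\; \inf_{v_1}\sum_{j=1}^J \|p_j-q_j - D\psi_j(v_1)\|^2 \;\ge\; \sum_{j=1}^J \|w_j\|^2,
\]
with equality only if a \emph{single} $v_1$ satisfies $D\psi_j(v_1)=u_j$ for every $j$ simultaneously; the tangential projections $u_j$ are computed independently in each factor and are generally not jointly realizable in this way. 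Your fallback choice $v_1=0$ gives only the trivial bound $\|w^*\|\le\|p-q\|$, which is far too weak to feed into step (5). So the chordal/tangent-angle route, as you have set it up, breaks precisely at the point you flagged as the main obstacle, and the suggested workaround does not repair it.

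The paper's argument takes a different route: it works directly from the normal-bundle definition of $\tau$. The key claim there is that any $\eta=(\eta_1,\dots,\eta_J)\in (T_p\manifold^*)^\perp$ has each component $\eta_j \in (T_{p_j}\manifold_j)^\perp$; granting this, $\|\eta\|,\|\nu\|<\min_j\tau_j$ forces $\|\eta_j\|,\|\nu_j\|<\tau_j$ for every $j$, whence $p_j+\eta_j\neq q_j+\nu_j$ by the condition number of $\manifold_j$, and therefore $p+\eta\neq q+\nu$. This sidesteps the chord/tangent-angle bookkeeping entirely. You should be aware, though, that this decomposition claim also deserves scrutiny: $(T_p\manifold^*)^\perp$ has dimension $JN-K$, whereas $\prod_j (T_{p_j}\manifold_j)^\perp$ has dimension $JN-JK$, so for $J\ge 2$ the former is strictly larger and must contain vectors whose components are not all normal to the respective $\manifold_j$'s.
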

\begin{proof}
Let $p \in \manifold^*$, which we can write as $p = (p_1, p_2,
\ldots, p_J)$ with $p_j \in \manifold_j$. Since the
$\{\psi_j\}_{j=2}^J$ are diffeomorphisms, we may view $\manifold^*$
as being diffeomorphic to $\manifold_1$; i.e., we can build a
diffeomorphic map from $\manifold_1$ to $\manifold^*$ as
$$
p = \psi^*(p_1) := (p_1, \psi_2(p_2), \ldots, \psi_J(p_J)).
$$

We also know that given any two manifolds linked by a diffeomorphism
$\psi_j: \manifold_1 \to \manifold_j$, each vector $v_1$ in the
tangent space $T_1(p_1)$ of the manifold $\manifold_1$ at the point
$p_1$ is {\em uniquely} mapped to a tangent vector $v_j :=
\phi_j(v_1)$ in the tangent space $T_j(p_j)$ of the manifold
$\manifold_j$ at the point $p_j = \psi_j(p_1)$ through the map
$\phi_j := \jacobian \circ \psi_j(p_1)$ , where $\jacobian$ denotes
the Jacobian operator.

Consider the application of this property to the diffeomorphic manifolds $\manifold_1$ and $\manifold^*$. In this case, the tangent vector $v_1 \in T_1(p_1)$ to the manifold $\manifold_1$ can be uniquely identified with a tangent vector $v = \phi^*(v_1) \in T^*(p)$ to the manifold $\manifold^*$. This mapping is expressed as
\begin{equation}
\phi^*(v_1) = \jacobian \circ \psi^*(p_1) =  (v_1, \jacobian \circ
\psi_2(p_1), \ldots, \jacobian \circ \psi_J(p_1)), \nonumber
\end{equation}
since the Jacobian operates componentwise. Therefore, the tangent vector $v$ can be written as
\begin{eqnarray}
v &=& \phi^*(v_1) = (v_1, \phi_2(v_1), \ldots, \phi_J(p_1)), \nonumber \\
&=& (v_1, v_2, \ldots, v_J).
\nonumber
\end{eqnarray}
In other words, a tangent vector to the joint manifold can be decomposed into $J$ component vectors, each of which are tangent to the corresponding component manifolds.

Using this fact, we now show that a vector $\eta$ that is normal to
$\manifold^*$ can also be broken down into sub-vectors that are
normal to the component manifolds. Consider $p \in \manifold^*$, and
denote $T^*(p)^\perp$ as the normal space at $p$. Suppose $\eta =
(\eta_1, \ldots, \eta_J) \in T^*(p)^\perp$. Decompose each $\eta_j$
as a projection onto the component tangent and normal spaces, i.e.,
for $j = 1,\ldots, J$,
$$
\eta_j = x_j + y_j, \qquad x_j \in T_j(p_j),~y_j \in T_j(p_j)^\perp .
$$
such that $\iprod{x_j}{y_j} = 0$ for each $j$. Let $x = (x_1,
\ldots, x_J)$ and $y = (y_1, \ldots, y_J)$. Then $\eta = x+y$, and
since $y$ is tangent to the joint manifold $\manifold^*$, we have
$\iprod{\eta}{y} = \iprod{x+y}{x} = 0$, and thus
$$
\iprod{y}{x}=-\|x\|^2.
$$
But,
$$
\iprod{y}{x} = \sum_{j=1}^J \iprod{y_j}{x_j} = 0 .
$$
Hence $x = 0$, i.e., each $\eta_j$ is normal to $\manifold_j$.

Armed with this last fact, our goal now is to show that if $r < \min_{1 \le j \le J} \tau_j$ then the normal
bundle of radius $r$ is embedded in $\reals^\dim$, or equivalently,
that $p + \eta \neq q + \nu$ provided that $\|\eta\|, \|\nu\| \le
r$. Indeed, suppose $\|\eta\|, \|\nu\| \le r < \min_{1\le j \le J} \tau_j$. Since
$\|\eta_j\| \le \|\eta\|$ and $\|\nu_j\| \le \|\nu\|$ for all $1 \le j \le J$,
we have that $\|\eta_j\|, \|\nu_j\| < \min_{1 \le i \le J} \tau_i \le \tau_j$. Since we have proved that $\eta_j, \nu_j$ are vectors in the normal bundle of $\manifold_j$ and their magnitudes are less than $\tau_j$, then $p_j + \eta_j \neq q_j + \nu_j$ by the definition of condition number. Thus $p + \eta \neq q + \nu$ and the result follows.
\end{proof}

This result states that for general manifolds, the most we can say
is that the condition number of the joint manifold is guaranteed to
be less than that of the {\em worst} manifold.  However, in practice
this is not likely to happen.  As an example, Figure
\ref{fig:normal_bundle} illustrates the point at which the normal
bundle intersects itself for the case of the joint manifold from
Figure \ref{fig:helix} (c).  In this case we obtain $\tau^*
= \sqrt{\pi^2/2 + 1}$.  Note that the condition numbers for the
manifolds $\manifold_1$ and $\manifold_2$ generating $\manifold^*$
are given by $\tau_1 = \infty$ and $\tau_2 = 1$.  Thus, while the
condition number in this case is not as good as the best manifold,
it is still notably better than the worst manifold.  In general,
even this example may be somewhat pessimistic, and it is possible
that in many cases the joint manifold may be better conditioned than
even the best manifold.

\begin{figure}[!t]
\centering
\includegraphics[width=0.5\hsize, viewport=50 60 400 300]{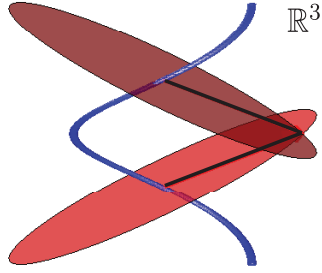}
\caption{\small\sl Point at which the normal bundle for the helix
manifold intersects itself.} \label{fig:normal_bundle}
\end{figure}

\section{Joint manifolds in signal processing}
\label{sec:processing}

Manifold models can be exploited by a number of algorithms for
signal processing tasks such as pattern classification, learning,
and control~\cite{multiscale}. The performance of such algorithms
often depends on geometric properties of the manifold model such as
its condition number and geodesic distances along its surface.  The
theory developed in Section~\ref{sec:jam} suggests that the joint
manifold preserves or improves these properties.  We will now see
that when noise is introduced these results suggest that, in the
case of multiple data sources, it can be extremely beneficial to use
algorithms specifically designed to exploit the joint manifold
structure.

\subsection{Classification} \label{sec:classify}

We first study the problem of manifold-based classification. The
problem is defined as follows: given manifolds $\manifold$ and $\manifoldN$, suppose
we observe a signal $y = x + n \in \reals^N$
where either $x \in \manifold$ or $x \in \manifoldN$ and $n$ is a
noise vector, and we wish to find a function $f: \reals^N \rightarrow
\{\manifold, \manifoldN \}$ that attempts to determine which
manifold ``generated'' $y$. We consider a simple classification
algorithm based on the {\em generalized maximum likelihood}
framework described in~\cite{smashedfilter}. The approach is to
classify by computing the distance from the observed signal $y$ to
each of the manifolds, and then classify based on which of these
distances is smallest, i.e., our classifier is
\begin{equation} \label{eq:classifier}
f(y) = \arg \min \left[ d(y,\manifold), d(y,\manifoldN) \right].
\end{equation}
We will measure the performance of this algorithm for a
particular pair of manifolds by considering the probability of
misclassifying a point from $\manifold$ as belonging to
$\manifoldN$, which we denote $P_{\manifold \manifoldN}$.

To analyze this problem, we employ three common notions of separation
in metric spaces:
\vspace{-2mm}
\begin{itemize}
\item The {\em minimum separation} distance
between two manifolds $\manifold$ and $\manifoldN$ is defined as
$$
\delta(\manifold,\manifoldN) = \inf_{p \in \manifold}
d(p,\manifoldN).
$$
\item The {\em Hausdorff distance} from $\manifold$ to $\manifoldN$
is defined to be
$$
D(\manifold,\manifoldN) = \sup_{p \in \manifold} d(p,\manifoldN),
$$
with $D(\manifoldN,\manifold)$ defined similarly. Note that
$\delta(\manifold,\manifoldN) = \delta(\manifoldN,\manifold)$, while
in general \\ $D(\manifold,\manifoldN) \neq
D(\manifoldN,\manifold)$.
\item The {\em maximum separation} distance between manifolds $\manifold$ and $\manifoldN$ is defined as
$$
\Delta(\manifold, \manifoldN) = \sup_{x \in \manifold} \sup_{y \in \manifoldN} \|x - y \|.
$$
\end{itemize}
\vspace{-2mm}
As one might expect, $P_{\manifold \manifoldN}$ is controlled by the
separation distances.  For example, suppose that $x \in \manifold$;
if the noise vector $n$ is bounded and satisfies $\|n\| <
\delta(\manifold,\manifoldN)/2$, then we have that $d(y,\manifold)
\le \|n\| < \delta(\manifold,\manifoldN)/2$ and hence
\begin{eqnarray*}
\delta(\manifold,\manifoldN) & = & \inf_{p \in \manifold, q \in
\manifoldN} \|p - q\| \\
& = & \inf_{p \in \manifold, q \in \manifoldN} \|p - y +y - q\| \\
 & \le & \inf_{p \in \manifold, q \in
\manifoldN} \|p - y\| +  \|y - q\| \\
 & = & d(y,\manifold) + d(y,\manifoldN) \\
 & < & \delta(\manifold,\manifoldN)/2 + d(y,\manifoldN).
\end{eqnarray*}
Thus we are guaranteed that
$$
d(y,\manifoldN) > \delta(\manifold,\manifoldN)/2 .
$$
Therefore, $d(y, \manifold) < d(y, \manifoldN)$ and the classifier
defined by (\ref{eq:classifier}) satisfies $P_{\manifold \manifoldN}
= 0$. We can refine this result in two possible ways. First, note
that the amount of noise $\epsilon$ that we can tolerate without
making an error depends on $x$. Specifically, for a given $x \in
\manifold$, provided that $\|n\| \le d(x,\manifoldN)/2$ we still
have that $P_{\manifold \manifoldN} = 0$.  Thus, for a given $x \in
\manifold$ we can tolerate noise bounded by $d(x,\manifoldN)/2 \in
[\delta(\manifold,\manifoldN)/2,D(\manifold,\manifoldN)/2]$.

A second possible refinement that we will explore below is to ignore this
dependence of $x$, but to extend our noise model to the case where
$\|n\| > \delta(\manifold,\manifoldN)/2$ with non-zero probability.
We can still bound $P_{\manifold \manifoldN}$ since
\begin{equation} \label{eq:pebound}
P_{\manifold \manifoldN} \le P(\|n\| >
\delta(\manifold,\manifoldN)/2).
\end{equation}

We provide bounds on this probability for both the component
manifolds and the joint manifold as follows: first, we first compare the
separation distances for these cases.
\begin{thm} \label{thm:djam}
Consider the joint manifolds $\manifold^* \subset \manifold_1 \times
\manifold_2 \times \cdots \times \manifold_J$ and $\manifoldN^*
\subset \manifoldN_1 \times \manifoldN_2 \times \cdots \times
\manifoldN_J$.  Then, the following bounds hold:
\begin{enumerate}
\item Joint minimum separation:
\begin{equation}
\sum_{j=1}^J \delta^2(\manifold_j,\manifoldN_j) \leq
\delta^2(\manifold^*,\manifoldN^*) \leq \min_{1 \leq k \leq J}
\left( \delta^2(\manifold_k, \manifoldN_k) + \sum_{j\neq k}
\Delta^2(\manifold_j,\manifoldN_j)\right). \label{eq:jms}
\end{equation}
\item Joint Hausdorff separation from $\manifold^*$ to $\manifoldN^*$:
\begin{equation}
\max_{1 \leq k \leq J} \left(D^2(\manifold_k,\manifoldN_k) + \sum_{j
\neq k} \delta^2(\manifold_j,\manifoldN_j)\right) \leq
D^2(\manifold^*,\manifoldN^*) \leq \sum_{j=1}^J
\Delta^2(\manifold_j,\manifoldN_j). \label{eq:jhs}
\end{equation}
\item Joint maximum separation from $\manifold^*$ to $\manifoldN^*$:
\begin{equation}
\max_{1 \leq k \leq J} \left(\Delta^2(\manifold_k,\manifoldN_k) +
\sum_{j \neq k} \delta^2(\manifold_j,\manifoldN_j)\right) \leq
\Delta^2(\manifold^*,\manifoldN^*) \leq \sum_{j=1}^J
\Delta^2(\manifold_j,\manifoldN_j). \label{eq:jmaxs}
\end{equation}
\end{enumerate}
\end{thm}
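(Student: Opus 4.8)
My plan is to run all three parts off a single engine. The first ingredient is Proposition~\ref{prop:euclidean}, which gives $\|p-q\|^2 = \sum_{j=1}^J \|p_j-q_j\|^2$ for any $p \in \manifold^*$ and $q \in \manifoldN^*$. The second is a structural observation: a point of a joint manifold is uniquely determined by any one of its components, since $\manifold^*$ is precisely the graph of the homeomorphisms $\psi_j$, so the coordinate projection $p \mapsto p_k$ is a bijection from $\manifold^*$ onto $\manifold_k$ (and likewise for $\manifoldN^*$ onto $\manifoldN_k$). Hence, given any $a \in \manifold_k$ I can \emph{lift} it to the unique $p(a)\in\manifold^*$ whose $k$-th coordinate is $a$, and every remaining coordinate still lies in its own component manifold, so $\delta(\manifold_j,\manifoldN_j) \le \|p(a)_j - q_j\| \le \Delta(\manifold_j,\manifoldN_j)$ against any $q\in\manifoldN^*$. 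The whole theorem is ``lift one coordinate, bound the rest.''

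I would do the joint minimum separation (\ref{eq:jms}) first, as the template. For the lower bound, note that for \emph{any} $p\in\manifold^*$, $q\in\manifoldN^*$ each term obeys $\|p_j-q_j\|^2\ge\delta^2(\manifold_j,\manifoldN_j)$; summing over $j$ via Proposition~\ref{prop:euclidean} and taking the infimum gives $\delta^2(\manifold^*,\manifoldN^*)\ge\sum_j\delta^2(\manifold_j,\manifoldN_j)$. For the upper bound, fix $k$ and $\eps>0$, pick $a_k\in\manifold_k$, $b_k\in\manifoldN_k$ with $\|a_k-b_k\|^2<\delta^2(\manifold_k,\manifoldN_k)+\eps$, lift $a_k$ to $p\in\manifold^*$ and $b_k$ to $q\in\manifoldN^*$, and use $\|p_j-q_j\|\le\Delta(\manifold_j,\manifoldN_j)$ for $j\ne k$ to get $\delta^2(\manifold^*,\manifoldN^*)\le\|p-q\|^2<\delta^2(\manifold_k,\manifoldN_k)+\eps+\sum_{j\ne k}\Delta^2(\manifold_j,\manifoldN_j)$; letting $\eps\downarrow0$ and minimizing over $k$ finishes. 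The upper bounds in (\ref{eq:jhs}) and (\ref{eq:jmaxs}) are even easier: $\|p-q\|^2=\sum_j\|p_j-q_j\|^2\le\sum_j\Delta^2(\manifold_j,\manifoldN_j)$ for all $p,q$ handles the maximum separation directly, and for the Hausdorff distance $D^2(\manifold^*,\manifoldN^*)=\sup_p\inf_q\sum_j\|p_j-q_j\|^2\le\sum_j\Delta^2(\manifold_j,\manifoldN_j)$ because a single choice of $q$ already caps the inner infimum.

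For the lower bound of (\ref{eq:jmaxs}), fix $k$, choose a near-extremal pair $a_k\in\manifold_k$, $b_k\in\manifoldN_k$ with $\|a_k-b_k\|^2>\Delta^2(\manifold_k,\manifoldN_k)-\eps$, lift them to $p,q$, and bound each remaining term below by $\delta^2(\manifold_j,\manifoldN_j)$; let $\eps\downarrow0$ and maximize over $k$. For the lower bound of (\ref{eq:jhs}), fix $k$ and choose $a_k^*\in\manifold_k$ with $d(a_k^*,\manifoldN_k)^2>D^2(\manifold_k,\manifoldN_k)-\eps$; lift it to $p\in\manifold^*$, and observe that for \emph{every} $q\in\manifoldN^*$ we have $\|p-q\|^2=\|a_k^*-q_k\|^2+\sum_{j\ne k}\|p_j-q_j\|^2>D^2(\manifold_k,\manifoldN_k)-\eps+\sum_{j\ne k}\delta^2(\manifold_j,\manifoldN_j)$, so $d(p,\manifoldN^*)^2$, and therefore $D^2(\manifold^*,\manifoldN^*)$, exceeds this; then $\eps\downarrow0$ and maximize over $k$.

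The only genuine subtlety — and the reason these come out as two-sided inequalities rather than identities — is the coordinate interdependence forced by the definition of $\manifold^*$: I may place one component of a joint-manifold point freely, but the rest are then determined, so the ``free'' coordinate realizes $\delta_k$, $D_k$, or $\Delta_k$ while the forced coordinates can only be squeezed between $\delta_j$ and $\Delta_j$. Getting the $\eps$-bookkeeping right in the sup/inf arguments is the one place to be careful; everything else is the componentwise Pythagoras of Proposition~\ref{prop:euclidean}. (I also use implicitly that the component manifolds are bounded, so the $\Delta(\manifold_j,\manifoldN_j)$ are finite — true, e.g., when the $\manifold_j$ are compact.)
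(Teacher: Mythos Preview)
Your proof is correct and follows essentially the same approach as the paper's: both exploit Proposition~\ref{prop:euclidean} to decompose $\|p-q\|^2$ componentwise, then ``fix one coordinate at its extremum and bound the rest between $\delta_j$ and $\Delta_j$.'' Your version is in fact slightly more careful --- you use $\eps$-approximations where the paper tacitly assumes the infima and suprema are attained, and you make the lifting bijection $p\mapsto p_k$ explicit --- but the skeleton of the argument is identical.
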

\begin{proof}
Inequality~(\ref{eq:jms}) is a simple corollary of Proposition
\ref{prop:euclidean}. Let $p = (p_1, p_2, \ldots, p_J)$ and $q =
(q_1, q_2, \ldots, q_J)$ respectively be the points on $\manifold^*$
and $\manifoldN^*$ for which the minimum separation distance
$\delta(\manifold^*,\manifoldN^*)$ is attained, i.e.,
$$
(p, q) = \arg \inf_{p \in \manifold^*} \inf_{q \in \manifoldN^*} \|p
- q\| .
$$
Then,
\begin{eqnarray*}
\delta^2(\manifold^*,\manifoldN^*)&=& \|p-q\|^2 = \sum_{j=1}^J \|p_j - q_j \|^2 \\
&\geq& \sum_{j=1}^J \delta^2(\manifold_j,\manifoldN_j),
\end{eqnarray*}
since the distance between two points in any given component space
is greater than the minimum separation distance corresponding to
that space.  This establishes the lower bound in~(\ref{eq:jms}). We
obtain the upper bound by selecting a $k$, and selecting $p \in
\manifold^*$ and $q \in \manifoldN^*$ such that $p_k$ and $q_k$
attain the minimum separation distance $\delta(\manifold_k,
\manifoldN_k)$.  From the definition of $\delta(\manifold^*,
\manifoldN^*)$, we have that
\begin{eqnarray*}
\delta^2(\manifold^*,\manifoldN^*)&\le& \|p-q\|^2 = \sum_{j=1}^J \|p_j - q_j \|^2 \\
&=& \delta^2(\manifold_k,\manifoldN_k) + \sum_{j \neq k} \|p_j -
q_j\|^2 \\
&\le & \delta^2(\manifold_k,\manifoldN_k) + \sum_{j \neq k}
\Delta^2(\manifold_j,\manifoldN_j),
\end{eqnarray*}
and since this holds for {\em every} choice of $k$, (\ref{eq:jms})
follows by taking the minimum over all $k$.

To prove inequality~(\ref{eq:jhs}), we follow a similar course. We
begin by selecting $p \in \manifold^*$ and $q \in \manifoldN^*$ that
satisfy
$$
(p, q) = \arg \sup_{p \in \manifold^*} \inf_{q \in \manifoldN^*} \|p
- q\|.
$$
Then,
\begin{eqnarray*}
D^2(\manifold^*,\manifoldN^*)&=& \|p-q\|^2 = \sum_{j=1}^J \|p_j - q_j \|^2 \\
&\leq& \sum_{j=1}^J \Delta^2(\manifold_j,\manifoldN_j),
\end{eqnarray*}
which establishes the upper bound in~(\ref{eq:jhs}).  To obtain the
lower bound, we again select a $k$, and now let $p \in \manifold^*$
be the point for which the corresponding at which the Hausdorff
separation {\em for the component manifold} $\manifold_k$ is
attained, i.e., the corresponding point $p_k$ is furthest away from
$\manifoldN_k$ as can be possible in $\manifold_k$. Let $q \in
\manifoldN^*$ be the nearest point in $\manifoldN^*$ to $p$. From
the definition of the Hausdorff distance, we get that
$$
D(\manifold^*,\manifoldN^*) \geq \|p-q\| ,
$$
since the Hausdorff distance is the {\em maximal} distance between the points in
$\manifold^*$ and their respective nearest neighbors in $\manifoldN^*$.
Again, it also follows that
\begin{eqnarray}
D^2(\manifold^*,\manifoldN^*) & \geq & \|p-q\|^2=\|p_k - q_k\|^2 + \sum_{j \neq k} \|p_j - q_j\|^2 \nonumber \\
&=&D^2(\manifold_k,\manifoldN_k) + \sum_{j \neq k} \|p_j - q_j\|^2 \nonumber \\
&\geq&D^2(\manifold_k,\manifoldN_k) + \sum_{j \neq k}
\delta^2(\manifold_j,\manifoldN_j) \nonumber .
\end{eqnarray}
Since this again holds for {\em every} choice of $k$,~(\ref{eq:jhs})
follows by taking the maximum over all $k$.

One can prove~(\ref{eq:jmaxs}) using the same technique used to
prove~(\ref{eq:jhs}).
\end{proof}

As an example, if we consider the case where the separation distances
are constant for all $j$, then the joint minimum separation distance
satisfies
\begin{eqnarray*}
\sqrt{J} \delta(\manifold_1,\manifoldN_1) \le
\delta(\manifold^*,\manifoldN^*) & \le &
\sqrt{\delta^2(\manifold_1,\manifoldN_1) +
(J-1)\Delta^2(\manifold_1,\manifoldN_1)} \\
&\le & \delta(\manifold_1,\manifoldN_1) + \sqrt{J-1}
\Delta(\manifold_1,\manifoldN_1)
\end{eqnarray*}
In the case where $\delta(\manifold_1,\manifoldN_1) \ll
\Delta(\manifold_1,\manifoldN_1)$ then we observe that
$\delta(\manifold^*,\manifoldN^*)$ can be considerably larger than
$\sqrt{J} \delta(\manifold_1,\manifoldN_1)$.  This means that we can
potentially tolerate much more noise while ensuring $P_{\manifold^*
\manifoldN^*} = 0$.  To see this, write $n = (n_1, n_2, \ldots,
n_J)$ and recall that we require $\|n_j\| < \epsilon =
\delta(\manifold_j,\manifoldN_j)/2$ to ensure that $P_{\manifold_j
\manifoldN_j} = 0$.  Thus, if we require that $P_{\manifold_j
\manifoldN_j} = 0$ for all $j$, then we have that
$$
\|n\| = \sqrt{\sum_{j=1}^J \|n_j\|^2 } < \sqrt{J} \epsilon =
\sqrt{J} \delta(\manifold_1,\manifoldN_1)/2.
$$
However, if we instead only require that $P_{\manifold^*
\manifoldN^*} = 0$ we only need $\| n\| <
\delta(\manifold^*,\manifoldN^*)/2$, which can be a significantly
less stringent requirement.

The benefit of classification using the joint manifold is made more
apparent when we extend our noise model to the case where we allow
$\|n_j\| > \delta(\manifold_j,\manifoldN_j)/2$ with non-zero
probability and apply (\ref{eq:pebound}).  To bound the probability
in (\ref{eq:pebound}), we will make use of the following adaptation
of Hoeffding's inequality~\cite{hoeffding}.
\begin{lemma} \label{lem:hoeff}
Suppose that $n_j \in \reals^N$ is a random vector that satisfies
$\|n_j\| \le \epsilon$, for $j = 1,2,\ldots,J$.  Suppose also that
the $n_j$ are independent and identically distributed (i.i.d.) with
$E[\|n_j\|] = \sigma$.  Then if $n = (n_1, n_2, \ldots, n_J) \in
\reals^{JN}$, we have that for any $\lambda > 0$,
$$
P\left( \|n\|^2 > J (\sigma^2 + \lambda) \right) \le \exp \left( -
\frac{2J\lambda^2}{\epsilon^4} \right).
$$
\end{lemma}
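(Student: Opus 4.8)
The plan is to view $\|n\|^2 = \sum_{j=1}^J \|n_j\|^2$ as a sum of $J$ independent random variables $Z_j := \|n_j\|^2$, each of which is bounded, and then apply the classical (scalar) Hoeffding inequality to the centered sum. First I would record the three facts we need about each $Z_j$: it is nonnegative, it is bounded above by $\epsilon^2$ (since $\|n_j\| \le \epsilon$), so $Z_j \in [0,\epsilon^2]$; and its mean is $E[Z_j] = E[\|n_j\|^2]$. The $Z_j$ are i.i.d.\ because the $n_j$ are. Hoeffding's inequality applied to the sum $\sum_j Z_j$ of independent variables lying in intervals of length $\epsilon^2$ gives, for any $t>0$,
\begin{equation}
P\!\left( \sum_{j=1}^J Z_j - \sum_{j=1}^J E[Z_j] > t \right) \le \exp\!\left( - \frac{2t^2}{J \epsilon^4} \right). \nonumber
\end{equation}
Setting $t = J\lambda$ makes the exponent $-2J^2\lambda^2/(J\epsilon^4) = -2J\lambda^2/\epsilon^4$, which is exactly the bound claimed.

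The one genuine gap to bridge is that the statement is phrased in terms of $\sigma = E[\|n_j\|]$, whereas the natural quantity coming out of the argument is $E[\|n_j\|^2]$. So I would need $\sum_j E[Z_j] = J\,E[\|n_j\|^2] \le J(\sigma^2 + \lambda)$, i.e.\ $E[\|n_j\|^2] \le \sigma^2 + \lambda$. This does \emph{not} follow from Jensen (which goes the wrong way). The clean way to get it: write $E[\|n_j\|^2] = E[\|n_j\|^2] - \sigma^2 + \sigma^2 = \mathrm{Var}(\|n_j\|) + \sigma^2$, and bound the variance. Since $\|n_j\| \in [0,\epsilon]$, one has $\mathrm{Var}(\|n_j\|) \le \epsilon^2/4$ by Popoviciu's inequality; but that is a constant, not $\lambda$. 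I suspect the intended reading is simply to apply Hoeffding to the event $\{\sum_j Z_j > J(\sigma^2+\lambda)\} \subseteq \{\sum_j Z_j - \sum_j E[Z_j] > J(\sigma^2+\lambda) - J\,E[\|n_j\|^2]\}$ and then, \emph{under the implicit assumption that $E[\|n_j\|^2] \le \sigma^2$} (or after absorbing the variance term, treating $\sigma$ as $E[\|n_j\|^2]^{1/2}$ rather than $E[\|n_j\|]$), conclude the deviation exceeds $J\lambda$. I would state this assumption explicitly rather than leave it buried.

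Concretely, the steps in order: (1) define $Z_j = \|n_j\|^2$, note independence and $0 \le Z_j \le \epsilon^2$; (2) invoke scalar Hoeffding for $\sum_j Z_j$ with range $\epsilon^2$ and deviation parameter $t$; (3) choose $t = J\lambda$ to produce the exponent $-2J\lambda^2/\epsilon^4$; (4) relate $\{\|n\|^2 > J(\sigma^2+\lambda)\}$ to the tail event for the centered sum, using $E[\|n\|^2] = J\,E[\|n_j\|^2] \le J\sigma^2$. The main obstacle is step (4) — the $\sigma$ versus $\sqrt{E[\|n_j\|^2]}$ discrepancy — and I would resolve it by reading $\sigma$ as (an upper bound on) the root-mean-square $\sqrt{E[\|n_j\|^2]}$, which is the quantity the Chernoff argument actually controls and which is all that is used downstream in bounding $P_{\manifold^*\manifoldN^*}$.
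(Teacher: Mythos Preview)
Your approach is exactly the intended one: the paper does not actually supply a proof of this lemma at all --- it is simply stated as ``the following adaptation of Hoeffding's inequality~\cite{hoeffding}'' and then used. So there is no alternative argument in the paper to compare against; the route you outline (set $Z_j = \|n_j\|^2 \in [0,\epsilon^2]$, apply scalar Hoeffding to $\sum_j Z_j$ with $t = J\lambda$) is clearly what the authors have in mind.

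You are also right to flag the $\sigma$ issue, and your diagnosis is correct. As stated, the lemma centers the tail at $J(\sigma^2+\lambda)$ with $\sigma = E[\|n_j\|]$, but Hoeffding centers at $J\,E[\|n_j\|^2] \ge J\sigma^2$ by Jensen, and the containment of events goes the wrong way. The paper's downstream use confirms your reading: immediately before Theorem~\ref{thm:bounds} they write ``$E[\|n\|] = \sqrt{J}\sigma$'' (which already presumes $E[\|n\|^2] = J\sigma^2$, not merely $E[\|n_j\|] = \sigma$), and in the proof of Theorem~\ref{thm:bounds} they set $\lambda = \delta^2/4J - \sigma^2$ and plug directly into the lemma as if $\sigma^2 = E[\|n_j\|^2]$. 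So the fix you propose --- interpret $\sigma^2$ as $E[\|n_j\|^2]$ rather than $(E[\|n_j\|])^2$ --- is precisely what makes the lemma true and is how the paper actually uses it. State that assumption explicitly and your proof is complete.
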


Using this lemma we can relax the assumption on $\epsilon$ so that
we only require that it is finite, and instead make the weaker
assumption that $E[\|n\|] = \sqrt{J} \sigma \le \delta(\manifold,
\manifoldN)/2$ for a particular pair of manifolds $\manifold$,
$\manifoldN$. This assumption ensures that $\lambda =
\delta^2(\manifold,\manifoldN)/4 - \sigma^2 > 0$, so that we can
combine Lemma \ref{lem:hoeff} with (\ref{eq:pebound}) to obtain a
bound on $P_{\manifold \manifoldN}$.  Note that if this condition
does not hold, then this is a very difficult classification problem
since the {\em expected} norm of the noise is large enough to push
us closer to the other manifold, in which case the simple classifier
given by (\ref{eq:classifier}) makes little sense.

We now illustrate how Lemma \ref{lem:hoeff} can be be used to
compare error bounds between classification using a joint manifold
and classification using a particular pair of component manifolds
$\manifold_k, \manifoldN_k$.

\begin{thm} \label{thm:bounds}
Suppose that we observe a vector $y = x + n$ where $x \in
\manifold^*$ and  $n = (n_1, n_2, \ldots, n_J)$ is a random vector
such that $\|n_j\| \le \epsilon$, for $j = 1,2,\ldots,J$, and that
the $n_j$ are i.i.d.\ with
$E[\|n_j\|] = \sigma \le \delta(\manifold_k,\manifoldN_k)/2$.  If
\begin{equation}
\delta(\manifold_k,\manifoldN_k) \le
\frac{\delta(\manifold^*,\manifoldN^*)}{\sqrt{J}},
\end{equation}
and we classify the observation $y$ according to (\ref{eq:classifier}), then
\begin{equation} \label{eq:pebound2}
P_{\manifold^* \manifoldN^*} \le \exp \left(-\frac{2c^*}{\epsilon^4}
\right),
\end{equation}
and
\begin{equation} \label{eq:pebound3}
P_{\manifold_k \manifoldN_k} \le \exp \left(-\frac{2c_k}{\epsilon^4}
\right),
\end{equation}
such that
$$
c^* > c_k.
$$
\end{thm}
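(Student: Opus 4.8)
The plan is to apply Lemma~\ref{lem:hoeff} twice --- once to the joint manifold $\manifold^*$ and once to the component pair $\manifold_k,\manifoldN_k$ --- and then to compare the two exponents. For the joint manifold, the relevant noise vector is the full $n = (n_1,\ldots,n_J) \in \reals^{JN}$, which has $E[\|n_j\|] = \sigma$ and $\|n_j\| \le \epsilon$, so Lemma~\ref{lem:hoeff} gives, for any $\lambda > 0$,
$$
P\!\left(\|n\|^2 > J(\sigma^2+\lambda)\right) \le \exp\!\left(-\frac{2J\lambda^2}{\epsilon^4}\right).
$$
Using (\ref{eq:pebound}) with the pair $\manifold^*,\manifoldN^*$, we want $J(\sigma^2+\lambda) = \delta^2(\manifold^*,\manifoldN^*)/4$, i.e.\ $\lambda^* = \delta^2(\manifold^*,\manifoldN^*)/(4J) - \sigma^2$; the hypothesis $\sigma \le \delta(\manifold_k,\manifoldN_k)/2 \le \delta(\manifold^*,\manifoldN^*)/(2\sqrt{J})$ guarantees $\lambda^* \ge 0$ (strictly positive unless the extreme case holds). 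This yields $P_{\manifold^*\manifoldN^*} \le \exp(-2c^*/\epsilon^4)$ with $c^* = J(\lambda^*)^2 = J\bigl(\delta^2(\manifold^*,\manifoldN^*)/(4J) - \sigma^2\bigr)^2$.

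Next I would repeat the argument for the single pair $\manifold_k,\manifoldN_k$. Here the relevant ``noise'' seen by the component classifier is just $n_k \in \reals^N$, a single term; but to reuse Lemma~\ref{lem:hoeff} as stated (which is phrased for a $J$-fold concatenation), the cleanest route is to note that $n_k$ itself can be viewed through the same inequality with $J$ replaced by $1$ --- equivalently, one applies the scalar Hoeffding bound to the single bounded random variable $\|n_k\|^2 \in [0,\epsilon^2]$ with mean $E[\|n_k\|^2] \le \sigma'$ for some $\sigma' $; more simply, apply Lemma~\ref{lem:hoeff} with the degenerate choice that makes $c_k = (\lambda_k)^2$ where $\lambda_k = \delta^2(\manifold_k,\manifoldN_k)/4 - \sigma^2 \ge 0$ (again positive by the hypothesis $\sigma \le \delta(\manifold_k,\manifoldN_k)/2$). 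This gives (\ref{eq:pebound3}) with $c_k = \bigl(\delta^2(\manifold_k,\manifoldN_k)/4 - \sigma^2\bigr)^2$. I would state this component-level bound carefully, since it is the one place where the hypotheses of Lemma~\ref{lem:hoeff} are being invoked slightly outside their literal $J$-fold form.

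The final and main step is the comparison $c^* > c_k$. Write $a = \delta^2(\manifold^*,\manifoldN^*)/(4J)$ and $b = \delta^2(\manifold_k,\manifoldN_k)/4$; the hypothesis $\delta(\manifold_k,\manifoldN_k) \le \delta(\manifold^*,\manifoldN^*)/\sqrt{J}$ is exactly $b \le a$, and $\sigma \le \delta(\manifold_k,\manifoldN_k)/2$ gives $\sigma^2 \le b \le a$. Then $c^* = J(a-\sigma^2)^2$ and $c_k = (b-\sigma^2)^2$. Since $0 \le b - \sigma^2 \le a - \sigma^2$ we get $(b-\sigma^2)^2 \le (a-\sigma^2)^2 \le J(a-\sigma^2)^2 = c^*$ whenever $J \ge 1$, with the inequality strict as soon as $J \ge 2$ or $b < a$ (and in the degenerate boundary case one notes that the problem hypotheses should be read as making $\lambda^* > 0$, so $c^* > 0 = c_k$ is impossible only if $b = a = \sigma^2$, which is the trivial no-noise-margin situation one excludes). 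The one subtlety to flag is ensuring the strict inequality: I would remark that when $J \ge 2$ the factor $J$ already forces $c^* > c_k$ provided $a - \sigma^2 > 0$, and that the standing assumption $E[\|n\|] = \sqrt{J}\sigma \le \delta(\manifold^*,\manifoldN^*)/2$ discussed before Theorem~\ref{thm:bounds} is precisely what guarantees $a - \sigma^2 > 0$. Assembling these three pieces --- the two Hoeffding applications and the elementary inequality $J(a-\sigma^2)^2 > (b-\sigma^2)^2$ --- completes the proof. The main obstacle is purely bookkeeping: making sure the component-manifold bound is derived from a legitimate instance of the concentration inequality and that the chain $\sigma^2 \le b \le a$ is used in the right places to get a genuinely strict $c^* > c_k$.
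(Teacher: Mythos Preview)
Your proposal is correct and follows essentially the same approach as the paper: apply Lemma~\ref{lem:hoeff} once with the full $J$ to get $c^* = J\bigl(\delta^2(\manifold^*,\manifoldN^*)/(4J) - \sigma^2\bigr)^2$, once with $J=1$ to get $c_k = \bigl(\delta^2(\manifold_k,\manifoldN_k)/4 - \sigma^2\bigr)^2$, and then compare the exponents. Your final comparison via the chain $\sigma^2 \le b \le a$ (hence $(b-\sigma^2)^2 \le (a-\sigma^2)^2 \le J(a-\sigma^2)^2$) is in fact slightly more direct than the paper's, which reaches the equivalent inequality $\sqrt{c_k} \le \sqrt{c^*}$ through a short algebraic detour rewriting $\delta^2(\manifold^*,\manifoldN^*)/J$; the two arguments are otherwise identical, including the use of Lemma~\ref{lem:hoeff} in the degenerate $J=1$ case and the same handling of the strict-versus-nonstrict inequality at the boundary.
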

\begin{proof}
First, observe that
\begin{equation} \label{eq:sigmabound}
\frac{\delta^2(\manifold^*,\manifoldN^*)}{J} \ge
\delta^2(\manifold_k,\manifoldN_k) \ge 4 \sigma^2.
\end{equation}
Thus, we may set $\lambda = \delta^2(\manifold^*,\manifoldN^*)/4J -
\sigma^2 > 0$ and apply Lemma \ref{lem:hoeff} to obtain
(\ref{eq:pebound2}) with
$$
c^* = J \left( \frac{\delta^2(\manifold^*,\manifoldN^*)}{4J} -
\sigma^2 \right)^2.
$$
Similarly, we may again apply Lemma \ref{lem:hoeff} by setting
$\lambda = \delta^2(\manifold_j,\manifoldN_j)/4 - \sigma^2 > 0$ and
$J=1$ to obtain (\ref{eq:pebound3}) with
$$
c_k = \left( \frac{\delta^2(\manifold_k,\manifoldN_k)}{4} - \sigma^2
\right)^2.
$$

It remains to show that $c^* > c_k$.  Thus, observe that
\begin{eqnarray*}
\delta^2(\manifold_k,\manifoldN_k) & \le &
\frac{\delta^2(\manifold^*,\manifoldN^*)}{J} \\
 & = & \frac{ \sqrt{J} \delta^2(\manifold^*,\manifoldN^*) -
 (\sqrt{J}-1) \delta^2(\manifold^*,\manifoldN^*)}{J} \\
 & = & \frac{\delta^2(\manifold^*,\manifoldN^*)}{\sqrt{J}} -
 (\sqrt{J}-1) \frac{\delta^2(\manifold^*,\manifoldN^*)}{J} \\
 & \le &\frac{\delta^2(\manifold^*,\manifoldN^*)}{\sqrt{J}} -
 4 \sigma^2 (\sqrt{J}-1),
\end{eqnarray*}
where the last inequality follows from (\ref{eq:sigmabound}).
Rearranging terms, we obtain
$$
\frac{\delta^2(\manifold_k,\manifoldN_k)}{4} - \sigma^2 \le \sqrt{J}
\left( \frac{ \delta^2(\manifold^*,\manifoldN^*)}{4 J} - \sigma^2
\right).
$$
Thus,
$$
\sqrt{c_k} \le \sqrt{c^*},
$$
and since $c_k > 0$ by assumption, we obtain
$$
c_k \le c^*,
$$
as desired.
\end{proof}
This result can be weakened slightly to obtain the following
corollary.
\begin{cor} \label{cor:bounds}
Suppose that we observe a vector $y = x + n$ where $x \in
\manifold^*$ and  $n = (n_1, n_2, \ldots, n_J)$ is a random vector
such that $\|n_j\| \le \epsilon$, for $j = 1,2,\ldots,J$ and that
the $n_j$ are i.i.d.\ with
$E[\|n_j\|] = \sigma \le \delta(\manifold_k,\manifoldN_k)/2$.  If
\begin{equation} \label{eq:cor_cond}
\delta^2(\manifold_k,\manifoldN_k) \le \frac{\sum_{j \neq k}
\delta^2(\manifold_j,\manifoldN_j)}{J-1},
\end{equation}
and we classify according to (\ref{eq:classifier}), then
(\ref{eq:pebound2}) and (\ref{eq:pebound3}) hold with the same
constants as in Theorem \ref{thm:bounds}.
\end{cor}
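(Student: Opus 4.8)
The plan is to show that hypothesis (\ref{eq:cor_cond}) already implies the hypothesis $\delta(\manifold_k,\manifoldN_k) \le \delta(\manifold^*,\manifoldN^*)/\sqrt{J}$ of Theorem \ref{thm:bounds}, after which the corollary follows immediately by invoking that theorem, since the remaining hypotheses on the noise vector $n$ (namely $\|n_j\| \le \epsilon$, the $n_j$ i.i.d.\ with $E[\|n_j\|] = \sigma \le \delta(\manifold_k,\manifoldN_k)/2$) are word-for-word identical in the two statements.

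The key step is a short chain of inequalities. First I would invoke the lower bound on the joint minimum separation from Theorem \ref{thm:djam}, namely $\sum_{j=1}^J \delta^2(\manifold_j,\manifoldN_j) \le \delta^2(\manifold^*,\manifoldN^*)$. Next, starting from (\ref{eq:cor_cond}), I multiply through by $J-1$ to obtain $(J-1)\,\delta^2(\manifold_k,\manifoldN_k) \le \sum_{j \neq k}\delta^2(\manifold_j,\manifoldN_j)$, and then add $\delta^2(\manifold_k,\manifoldN_k)$ to both sides to get $J\,\delta^2(\manifold_k,\manifoldN_k) \le \sum_{j=1}^J \delta^2(\manifold_j,\manifoldN_j)$. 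Chaining these two facts yields $\delta^2(\manifold_k,\manifoldN_k) \le \delta^2(\manifold^*,\manifoldN^*)/J$, which is precisely the hypothesis required to apply Theorem \ref{thm:bounds}. Applying that theorem then produces (\ref{eq:pebound2}) and (\ref{eq:pebound3}) with the same constants $c^*$ and $c_k$.

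I do not anticipate a genuine obstacle here; the only point requiring care is the direction of the implication. Condition (\ref{eq:cor_cond}) is weaker than the condition of Theorem \ref{thm:bounds} precisely because of the slack in the lower bound of (\ref{eq:jms}), which can be strict; so the corollary is a strictly more convenient sufficient condition rather than an equivalent one. It is worth remarking, as motivation rather than as part of the argument, that (\ref{eq:cor_cond}) is a statement purely about the \emph{component} separation distances --- it asserts that the pair $\manifold_k,\manifoldN_k$ is no better separated than the average of the remaining pairs --- and hence is typically much easier to verify in practice than a condition phrased directly in terms of $\delta(\manifold^*,\manifoldN^*)$.
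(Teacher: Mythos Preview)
Your proof is correct and essentially identical to the paper's: both rearrange (\ref{eq:cor_cond}) to $J\,\delta^2(\manifold_k,\manifoldN_k) \le \sum_{j=1}^J \delta^2(\manifold_j,\manifoldN_j)$, invoke the lower bound (\ref{eq:jms}) from Theorem \ref{thm:djam}, and then apply Theorem \ref{thm:bounds}. One terminological quibble in your closing remarks: since (\ref{eq:cor_cond}) implies the hypothesis of Theorem \ref{thm:bounds} (and not conversely, because the inequality in (\ref{eq:jms}) can be strict), it is the \emph{stronger} hypothesis, not the weaker one --- though, as you correctly note, it is more convenient because it involves only component-manifold separation distances.
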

\begin{proof}
We can rewrite (\ref{eq:cor_cond}) as
$$
\delta^2(\manifold_k,\manifoldN_k) \le \frac{\sum_{j=1}^J
\delta^2(\manifold_j,\manifoldN_j) -
\delta^2(\manifold_k,\manifoldN_k)}{J-1}.
$$
After rearranging terms, this reduces to
$$
\delta^2(\manifold_k,\manifoldN_k) \le \frac{\sum_{j=1}^J
\delta^2(\manifold_j,\manifoldN_j)}{J}.
$$
Applying (\ref{eq:jms}) from Theorem \ref{thm:djam}, we obtain
$$
\delta^2(\manifold_k,\manifoldN_k) \le
\frac{\delta^2(\manifold^*,\manifoldN^*)}{J},
$$
which allows us to apply Theorem \ref{thm:bounds} to prove the
desired result.
\end{proof}

Corollary~\ref{cor:bounds} shows that we can expect joint
classification to outperform the $k$-th individual classifier
whenever the squared separation distance for the $k$-th component
manifolds is not too much larger than the average squared separation
distance among the remaining component manifolds.  Thus, we can
expect that the joint classifier is outperforming most of the
individual classifiers, but it is still possible that some of the
individual classifiers are doing better.  Of course, if one were
able to know in advance which classifiers were best, then one would
only use data from the best sensors.  We expect that a more typical
situation is when the separation distances are (approximately) equal
across all sensors, in which case the condition
in~(\ref{eq:cor_cond}) is true for all of the component manifolds.

\subsection{Manifold learning}
\label{sec:learning}

In contrast to the classification scenario described above, where we
knew the manifold structure {\em a priori}, we now consider manifold
{\em learning} algorithms that attempt to learn the manifold
structure by constructing a (possibly nonlinear) embedding of a
given point cloud into a subset of $\reals^L$, where $L < N$.
Typically, $L$ is set to $K$, the intrinsic manifold dimension.
Several such algorithms have been proposed, each giving rise to a
nonlinear map with its own special properties and advantages (e.g.
Isomap~\cite{isomap}, Locally Linear Embedding (LLE)~\cite{lle},
Hessian Eigenmaps~\cite{hlle}, etc.) Such algorithms provide a
powerful framework for navigation, visualization and interpolation
of high-dimensional data. For instance, manifold learning can be
employed in the inference of articulation parameters (eg., 3-D pose)
of points sampled from an image appearance manifold.

In particular, the Isomap algorithm deserves special mention. It assumes
that the point cloud consists of samples from a data
manifold that is (at least approximately) isometric to a convex
subset of Euclidean space. In this case, there exists an isometric
mapping $f$ from a parameter space $\Theta \subseteq \reals^K$
to the manifold $\manifold$ such that the geodesic distance between
every pair of data points is equal to the $\ell_2$ distance between their
corresponding pre-images in $\Theta$. In essence, Isomap attempts to
discover the coordinate structure of that $K$-dimensional space.

Isomap works in three stages:
\vspace{-2mm}
\begin{itemize}
\item We construct a graph $G$ that contains one vertex for each input data
point;  an edge connects two vertices if the Euclidean distance
between the corresponding data points is below a specified
threshold.
\item We weight each edge in the graph $G$ by computing the Euclidean distance between the corresponding data points.  We then estimate the geodesic distance between each pair
of vertices as the length of the shortest path between the
corresponding vertices in the graph $G$.
\item We embed the points in $\reals^K$ using multidimensional
scaling (MDS), which attempts to embed the points so that their
Euclidean distance approximates the geodesic distances estimated in
the previous step.
\end{itemize}
\vspace{-2mm}
A crucial component of the MDS algorithm is a suitable linear
transformation of the matrix of squared geodesic distances $D$; the
rank-$K$ approximation of this new matrix yields the best possible
$K$-dimensional coordinate structure of the input sample points in a
mean-squared sense. Further results on the performance of Isomap in
terms of geometric properties of the underlying manifold can be
found in~\cite{bernstein}.

We examine the performance of manifold learning using Isomap with
samples of the {\em joint manifold}, as compared to learning any of the
component manifolds.  We first assume that we are given noiseless
samples from the $J$ isometric component manifolds
$\manifold_1,\manifold_2,\ldots,\manifold_J$.  In order to judge the
quality of the embedding learned by the Isomap algorithm, we will
observe that for any pair of points $p,q$ on a manifold $\manifold$,
we have that
\begin{equation} \label{eq:isomap_iq}
\rho \le \frac{ \|p-q\| }{d_\manifold(p,q)} \le 1
\end{equation}
for some $\rho \in [0,1]$ that will depend on $p, q$.  Isomap will
perform well if the largest value of $\rho$ that
satisfies~(\ref{eq:isomap_iq}) for any pair of samples  that are
connected by an edge in the graph $G$ is close to $1$.  Using this
result, we can compare the performance of manifold learning using
Isomap on samples from the joint manifold $\manifold^*$ to using
Isomap on samples from a particular component manifold
$\manifold_k$.

\begin{thm} \label{thm:isomap_geo}
Let $\manifold^*$ be a joint manifold from $J$ isometric component
manifolds. Let $p = (p_1, p_2, \ldots, p_J)$ and $q = (q_1, q_2,
\ldots, q_J)$ denote a pair of samples of $\manifold^*$ and suppose
that we are given a graph $G$ that contains one vertex for each
sample. For each $k=1,\ldots,J$, define $\rho_j$ as the largest
value such that
\begin{equation} \label{eq:geothm1}
\rho_j \le \frac{ \|p_j-q_j\| }{d_{\manifold_j}(p_j,q_j)} \le 1
\end{equation}
for all pairs of points connected by an edge in $G$.  Then we have
that
\begin{equation} \label{eq:geothm2}
\sqrt{\frac{\sum_{j=1}^J \rho_j^2}{J}} \le \frac{ \|p-q\|
}{d_{\manifold^*}(p,q)} \le 1.
\end{equation}
\end{thm}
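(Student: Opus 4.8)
The plan is to reduce everything to two facts already in hand: Proposition~\ref{prop:euclidean}, which tells us that $\|p-q\|^2 = \sum_{j=1}^J \|p_j-q_j\|^2$, and Theorem~\ref{thm:geod}, which (since the component manifolds are isometric) gives $d_{\manifold^*}(p,q)^2 = J\, d_{\manifold_j}(p_j,q_j)^2$ for each $j$ — equivalently $d_{\manifold^*}(p,q)^2 = \sum_{j=1}^J d_{\manifold_j}(p_j,q_j)^2$, using that the geodesic distances on the component manifolds all agree. The upper bound $\|p-q\|/d_{\manifold^*}(p,q) \le 1$ is just~(\ref{eq:isomap_iq}) applied to the manifold $\manifold^*$ itself (Euclidean distance never exceeds geodesic distance), so there is nothing new to do there.

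For the lower bound, I would start from the definition of $\rho_j$: for the pair $(p,q)$ (which is connected by an edge in $G$), we have $\|p_j - q_j\| \ge \rho_j\, d_{\manifold_j}(p_j,q_j)$ for every $j$. Squaring and summing over $j$, and invoking Proposition~\ref{prop:euclidean} on the left,
\begin{equation*}
\|p-q\|^2 = \sum_{j=1}^J \|p_j-q_j\|^2 \ge \sum_{j=1}^J \rho_j^2\, d_{\manifold_j}(p_j,q_j)^2.
\end{equation*}
Now I use isometry to replace each $d_{\manifold_j}(p_j,q_j)^2$ by $d_{\manifold^*}(p,q)^2 / J$ (this is exactly~(\ref{eq:geod2})), which pulls the common factor out of the sum:
\begin{equation*}
\|p-q\|^2 \ge \frac{d_{\manifold^*}(p,q)^2}{J} \sum_{j=1}^J \rho_j^2.
\end{equation*}
Dividing by $d_{\manifold^*}(p,q)^2$ and taking square roots yields $\|p-q\|/d_{\manifold^*}(p,q) \ge \sqrt{\big(\sum_{j=1}^J \rho_j^2\big)/J}$, which is~(\ref{eq:geothm2}).

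The only thing to be slightly careful about is the logical scope of the $\rho_j$: they are defined as the largest constants that work \emph{simultaneously for all edges} of $G$, so in particular the inequality $\|p_j-q_j\| \ge \rho_j d_{\manifold_j}(p_j,q_j)$ holds for the specific edge $(p,q)$ we are examining — that is all the argument needs, so there is no real obstacle here. I should also note explicitly that the indexing in~(\ref{eq:geothm1}) uses $k$ while the constants are named $\rho_j$; I would just treat this as $\rho_j$ for $j = 1,\ldots,J$ and not belabor it. If one wants a cleaner statement one can remark that the bound is tight when all $\rho_j$ are equal (e.g.\ when the component manifolds are literally translates of one another), in which case the joint ratio equals the common component ratio — but that observation is optional and not part of the claimed inequality.
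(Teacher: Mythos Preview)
Your proof is correct and follows essentially the same route as the paper: both invoke Proposition~\ref{prop:euclidean} for $\|p-q\|^2 = \sum_j \|p_j-q_j\|^2$ and Theorem~\ref{thm:geod} for $d_{\manifold^*}(p,q)^2 = J\,d_{\manifold_j}(p_j,q_j)^2$, then apply the defining inequality for $\rho_j$ termwise and handle the upper bound by the trivial Euclidean--geodesic comparison. The only cosmetic difference is that the paper forms the ratio $\|p-q\|^2/d_{\manifold^*}(p,q)^2$ first and then bounds each summand, whereas you bound the summands before dividing; the content is identical.
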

\begin{proof}
By Proposition~\ref{prop:euclidean},
$$
\|p-q\|^2 = \sum_{j=1}^J \|p_j-q_j\|^2,
$$
and from Theorem~\ref{thm:geod} we have that
$$
d_{\manifold^*}^2(p,q) = J d_{\manifold_1}^2(p_1,q_1).
$$
Thus,
\begin{eqnarray*}
\frac{\|p-q\|^2}{d_{\manifold^*}^2(p,q)} & = & \frac{\sum_{j=1}^J
\|p_j-q_j\|^2}{J d_{\manifold_1}^2(p_1,q_1)} \\
& = & \frac{1}{J} \sum_{j=1}^J \frac{\|p_j-q_j\|^2}{
d_{\manifold_1}^2(p_1,q_1)} \\
& = & \frac{1}{J} \sum_{j=1}^J \frac{\|p_j-q_j\|^2}{
d_{\manifold_j}^2(p_j,q_j)} \\
& \ge & \frac{1}{J} \sum_{j=1}^J \rho_j^2,
\end{eqnarray*}
which establishes the lower bound in (\ref{eq:geothm2}). The upper
bound is trivial since we always have that $d_{\manifold^*}(p,q) \ge
\|p-q\|$.
\end{proof}

From Theorem~\ref{thm:isomap_geo} we see that, in many cases, the
joint manifold estimates of the geodesic distances will be more
accurate than the estimates obtained using one of the component
manifolds. For instance, if for particular component
manifold $\manifold_k$ we observe that
$$
\rho_k \le \sqrt{\frac{\sum_{j=1}^J \rho_j^2}{J}},
$$
then we know that the joint manifold leads to better estimates.
Essentially, we can expect that the joint manifold will lead to
estimates that are better than the average case across the component
manifolds.

We now consider the case where we have a sufficiently dense sampling
of the manifolds so that the $\rho_j$ are very close to one, and
examine the case where we are obtaining noisy samples.  We will
assume that the noise affecting the data samples is i.i.d., and
demonstrate that any distance calculation performed on $\manifold^*$
serves as a better estimator of the pairwise (and consequently,
geodesic) distances between two points labeled by $p$ and $q$ than
that performed on any component manifold between their corresponding
points $p_j$ and $q_j$.

\begin{thm}
Let $\manifold^*$ be a joint manifold from $J$ isometric component
manifolds. Let $p = (p_1, p_2, \ldots, p_J)$ and $q = (q_1, q_2,
\ldots, q_J)$ be samples of $\manifold^*$ and assume that
$\|p_j-q_j\| = d$ for all $j$. Assume that we acquire noisy
observations $s = p + n$ and $r = q + n'$, where $n = (n_1, n_2,
\ldots, n_J)$ and $n' = (n'_1, n'_2, \ldots, n'_J)$ are independent
noise vectors with the same variance and norm bound
$$
\mathbb{E}[\|n_j\|^2] = \sigma^2~~\textrm{and}~~\|n_j\|^2 \le \epsilon,~~j = 1,\ldots,J.
$$
Then,
$$P\left(1-\delta \le \frac{\|s-r\|^2}{\|p-q\|^2 + 2J\sigma^2} \le 1+\delta\right) \ge 1-2c^{-{J^2}},$$
where $c = \mathrm{exp}\left(2 \delta^2 \left(\frac{d^2 + 2\sigma^2}{d\sqrt{\epsilon}+\epsilon}\right)^2\right)$.
\label{thm:jml}
\end{thm}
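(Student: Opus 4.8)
The plan is the familiar one: expand $\|s-r\|^2$ into a sum of $J$ independent bounded scalars, show its expectation is exactly the normalizing quantity $\|p-q\|^2 + 2J\sigma^2$, and then invoke Hoeffding's inequality~\cite{hoeffding} (cf.\ Lemma~\ref{lem:hoeff}). Writing $s - r = (p-q) + (n - n')$ and setting $w_j := n_j - n_j'$, Proposition~\ref{prop:euclidean} (in the ambient space $\reals^{JN}$) gives
\begin{equation}
\|s-r\|^2 = \sum_{j=1}^J \|s_j - r_j\|^2 = \sum_{j=1}^J Z_j, \qquad Z_j := d^2 + 2\iprod{p_j - q_j}{w_j} + \|w_j\|^2. \nonumber
\end{equation}
Since the $n_j$ are i.i.d.\ across $j$, the $n_j'$ are i.i.d.\ across $j$, and $n$ is independent of $n'$, the $w_j$ are i.i.d.; hence the $Z_j$ are independent, each confined to an interval of the same length and (as computed next) with a common mean, which is all Hoeffding requires (identical distribution of the $Z_j$ is not needed, even though the direction of $p_j-q_j$ enters).

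I would first compute the mean. Because $n_j$ and $n_j'$ have the same distribution, $\mathbb{E}\iprod{p_j-q_j}{w_j}=\iprod{p_j-q_j}{\mathbb{E} w_j}=0$; and, reading $\sigma^2$ as the (zero-mean) noise variance, $\mathbb{E}\|w_j\|^2 = \mathbb{E}\|n_j\|^2 + \mathbb{E}\|n_j'\|^2 = 2\sigma^2$. Hence $\mathbb{E} Z_j = d^2 + 2\sigma^2$ and $\mathbb{E}\|s-r\|^2 = J(d^2+2\sigma^2) = \|p-q\|^2 + 2J\sigma^2$, so the event in the theorem is precisely $\bigl|\,\|s-r\|^2 - \mathbb{E}\|s-r\|^2\,\bigr| \le \delta\bigl(\|p-q\|^2 + 2J\sigma^2\bigr)$, and it remains to bound the probability that this fails.

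The main work will be bounding the range of each summand. Since $Z_j - d^2 = \iprod{w_j}{\,2(p_j-q_j)+w_j\,}$, Cauchy--Schwarz and the triangle inequality give $|Z_j - d^2| \le \|w_j\|\,(2d + \|w_j\|)$, and $\|w_j\| \le \|n_j\| + \|n_j'\| \le 2\sqrt{\epsilon}$ (from $\|n_j\|^2 \le \epsilon$), so each $Z_j$ lies in an interval whose length is a fixed multiple of $d\sqrt{\epsilon} + \epsilon$ --- the $d\sqrt{\epsilon}$ coming from the cross term and the $\epsilon$ from $\|w_j\|^2$. Plugging these interval lengths into Hoeffding's bound for $\sum_j Z_j$ with deviation $t = \delta(\|p-q\|^2+2J\sigma^2) = \delta J(d^2+2\sigma^2)$, the $t^2$ in the numerator contributes a factor $J^2$ that cancels one power of $J$ in Hoeffding's denominator, and dividing through by $\|p-q\|^2 + 2J\sigma^2$ converts the two-sided additive estimate into the multiplicative form claimed, with $c = \exp\!\bigl(C\,\delta^2(d^2+2\sigma^2)^2/(d\sqrt{\epsilon}+\epsilon)^2\bigr)$ for an absolute constant $C>0$.

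I expect the only genuinely delicate point to be pinning down the exact constant multiplying $d\sqrt{\epsilon}+\epsilon$ in the range bound, since this is precisely what fixes the form of $c$; everything else is the routine ``expand the square, recognize an independent sum, apply Hoeffding'' template already used for Lemma~\ref{lem:hoeff}. One caveat worth flagging: running this computation through yields a bound of the form $1 - 2c^{-J}$, so the exponent $J^2$ in the statement looks like a slip for $J$ (unless a correspondingly different normalization of $c$ is intended).
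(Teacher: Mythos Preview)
Your proposal is correct and follows essentially the same route as the paper: expand $\|s-r\|^2$ into a sum of $J$ independent bounded scalars, identify the mean as $\|p-q\|^2+2J\sigma^2$, bound the range of each summand by Cauchy--Schwarz as a constant times $d\sqrt{\epsilon}+\epsilon$, and apply Hoeffding. Your closing caveat is also well-taken: the paper's own computation silently passes from a $J\lambda$ deviation to a $J^2\lambda^2$ exponent, whereas a clean application of Hoeffding with $t=\delta J(d^2+2\sigma^2)$ yields only a single factor of $J$ in the exponent, so the stated $c^{-J^2}$ should indeed be $c^{-J}$ (or, equivalently, $c$ redefined).
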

\begin{proof}We write the distance between the noisy samples as
\begin{equation} \label{eq:dists}
\|s-r\|^2 = \sum_{j=1}^J \lbrace {\|p_j-q_j\|^2 + 2 \langle p_j - q_j, n_j - n'_j\rangle + \|n_j - n'_j\|^2} \rbrace.
\end{equation}
This can be rewritten as
\begin{equation} \label{eq:dists2}
\|s-r\|^2 - \|p-q\|^2 = \sum_{j=1}^J \lbrace {2 \langle p_j - q_j, n_j - n'_j\rangle + \|n_j - n'_j\|^2} \rbrace.
\end{equation}
We obtain the following statistics for the term inside the sum:
\begin{eqnarray}
\mathbb{E}[\langle p_j-q_j,n_j-n'_j \rangle + \|n_j-n'_j\|^2] &=& 2\sigma^2, \nonumber \\
\left|\langle p_j-q_j,n_j-n'_j \rangle + \|n_j-n'_j\|^2\right| &\le & 2d\sqrt{\epsilon}+\epsilon. \nonumber
\end{eqnarray}
Using Hoeffding's inequality, we obtain
$$P\left(\left|\sum_{j=1}^J \lbrace {2\langle p_j-q_j,n_j-n'_j \rangle + \|n_j-n'_j\|^2}\rbrace - 2J \sigma^2 \right| > J^2 \lambda\right) \le 2e^{-\frac{2J\lambda^2}{(2d\sqrt{\epsilon}+\epsilon)^2}}.$$
This result is rewritten to obtain
\begin{eqnarray}
P\left(\left|\|s-r\|^2 - \|p-q\|^2 - 2J\sigma^2 \right| > J\lambda\right) &\le& 2e^{-\frac{2J^2\lambda^2}{(2d\sqrt{\epsilon}+\epsilon)^2}}, \nonumber \\
P\left(\left|\|s-r\|^2 - \|p-q\|^2 - 2J\sigma^2\right| \le J\lambda\right) &\ge& 1-2e^{-\frac{2J^2\lambda^2}{(2d\sqrt{\epsilon}+\epsilon)^2}}. \nonumber
\end{eqnarray}
Simplifying, we get
$$
P\left(1 - \frac{\lambda}{d^2 + 2 \sigma^2} \le \frac{\|s-r\|^2}{\|p-q\|^2 + 2J\sigma^2} \le 1 + \frac{\lambda}{d^2 + 2\sigma^2}\right) \ge 1-2e^{-\frac{2J^2\lambda^2}{(2d\sqrt{\epsilon}+\epsilon)^2}}.
$$
Replace $\delta = \frac{\lambda}{d^2 + 2\sigma^2}$ to obtain the
result.
\end{proof}

We observe that the estimate of the true distance suffers from a
constant small bias; this can be handled using a simple debiasing
step.\footnote{Manifold learning algorithms such as Isomap deal with
biased estimates of distances by ``centering'' the matrix of squared
distances, i.e., removing the mean of each row/column from every
element.} Theorem~\ref{thm:jml} indicates that the probability of
large deviations in the estimated distance decreases {\em
exponentially} in the number of component manifolds $J$; thus the
``denoising'' effect in joint manifold learning is manifested even
in the case where only a small number of component manifolds are
present.

As an example, we consider three different manifolds formed by images of
an ellipse with major axis $a$ and minor axis $b$ translating in a
2-D plane; an example point is shown in Figure~\ref{fig:ellips}.
The eccentricity of the ellipse directly affects the condition number $1/ \tau$
of the image articulation manifold; in fact, it can be shown that articulation
manifolds formed by more eccentric ellipses exhibit higher values for the
condition number. Consequently, we expect that it is ``harder'' to learn such
manifolds.

\begin{figure}[t]
\centering
\begin{tabular}{ccc}
{\includegraphics[width=0.25\hsize]{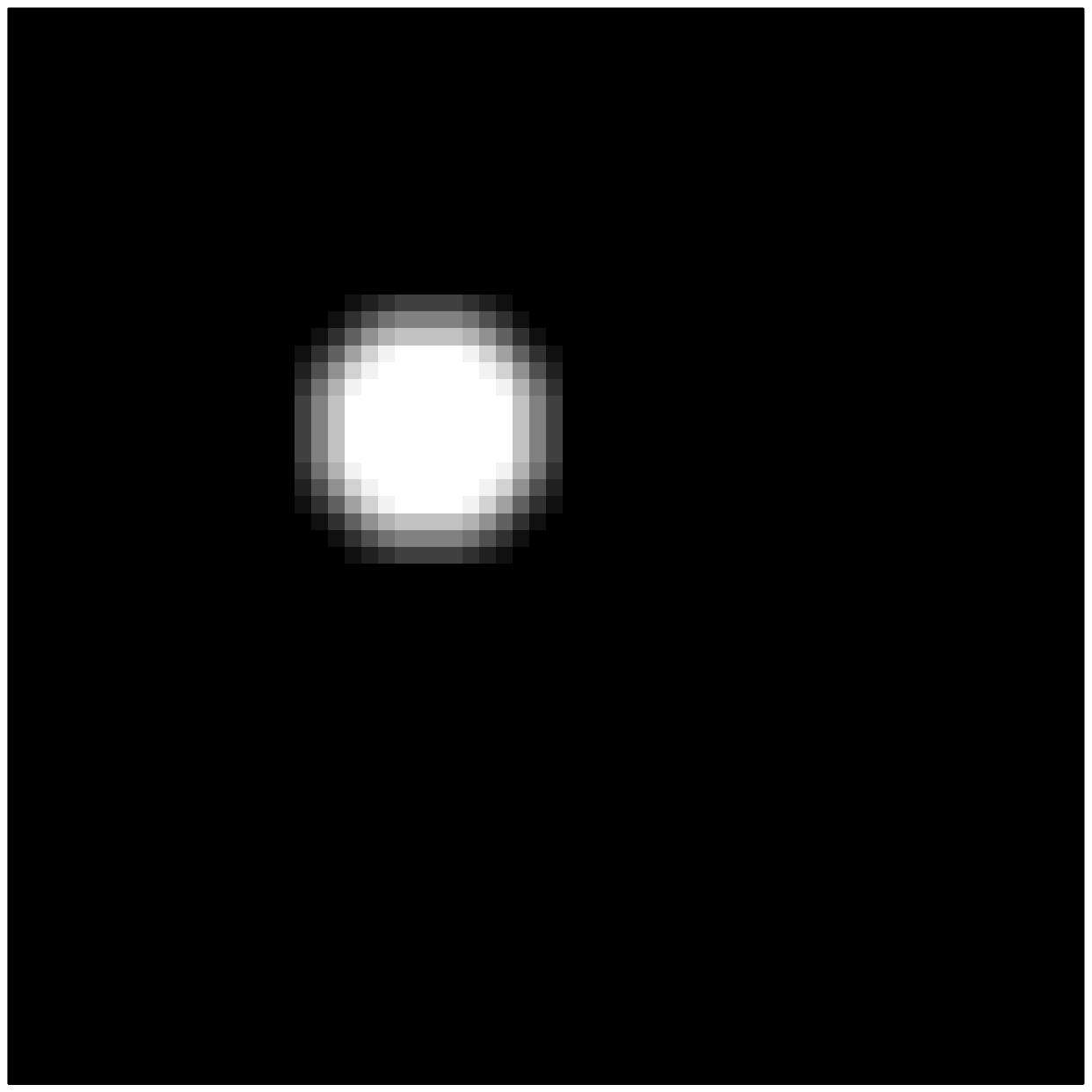}} &
{\includegraphics[width=0.25\hsize]{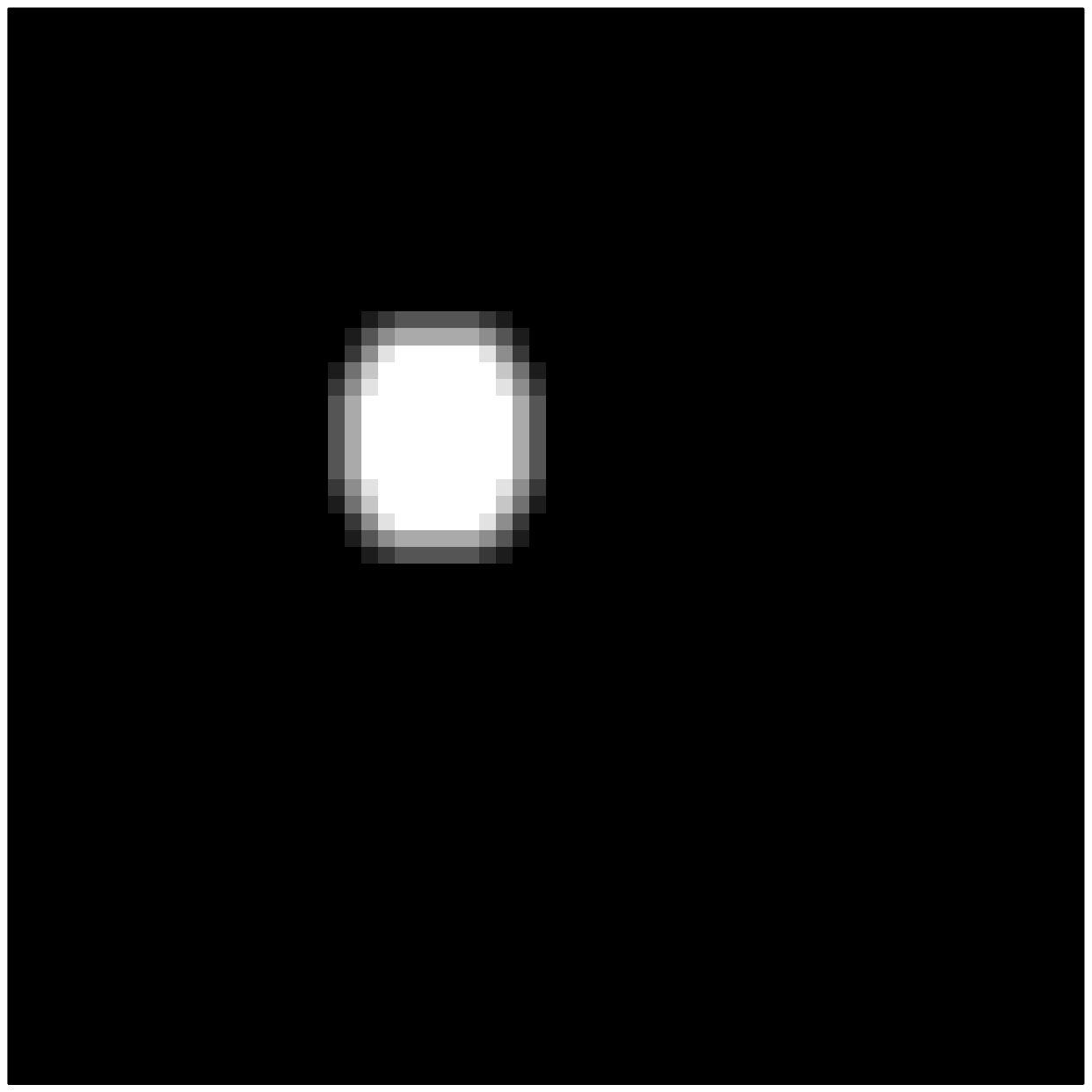}} &
{\includegraphics[width=0.25\hsize]{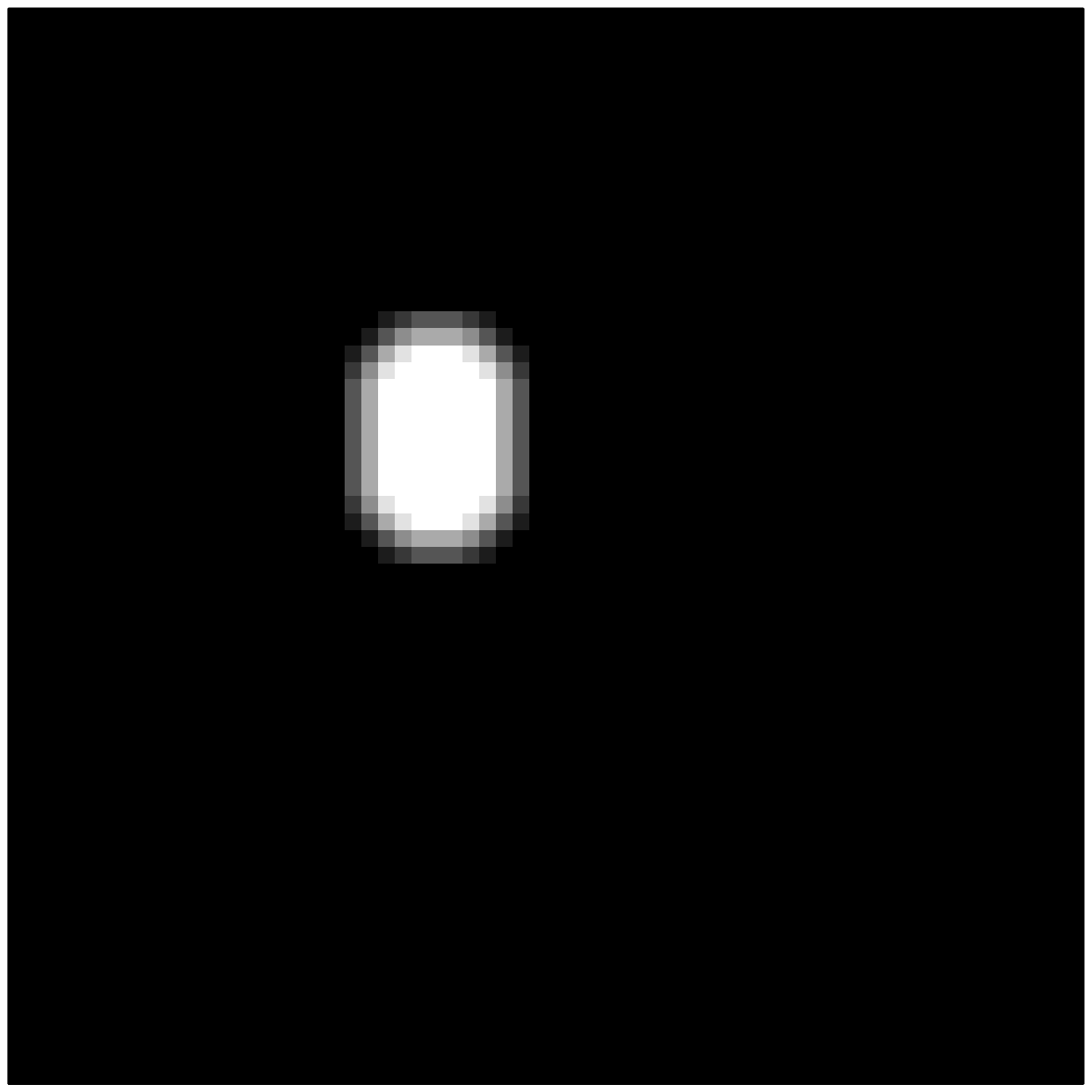}} \\
(i) $(a,b) = (7,7)$ & (ii) $(a,b) = (7,6)$  & (iii) $(a,b) = (7,5)$
\end{tabular}
\caption{\small\sl Three articulation manifolds embedded in $\reals^{4096}$ sharing
a common 2-D parameter space $\Theta$.}
\label{fig:ellips}
\end{figure}

Figure~\ref{fig:ellipselearn} shows that this is indeed the case. We add a small
amount of white gaussian noise to each image and apply the Isomap
algorithm~\cite{isomap} to both the individual datasets as well as the
concatenated dataset. We observe that the 2-D embedding is poorly learnt in
each of the individual manifolds, but improves visibly when the data ensemble
is modeled using a joint manifold.

\begin{figure}[t]
\centering
\begin{tabular}{cc}
{\includegraphics[width=0.3\hsize]{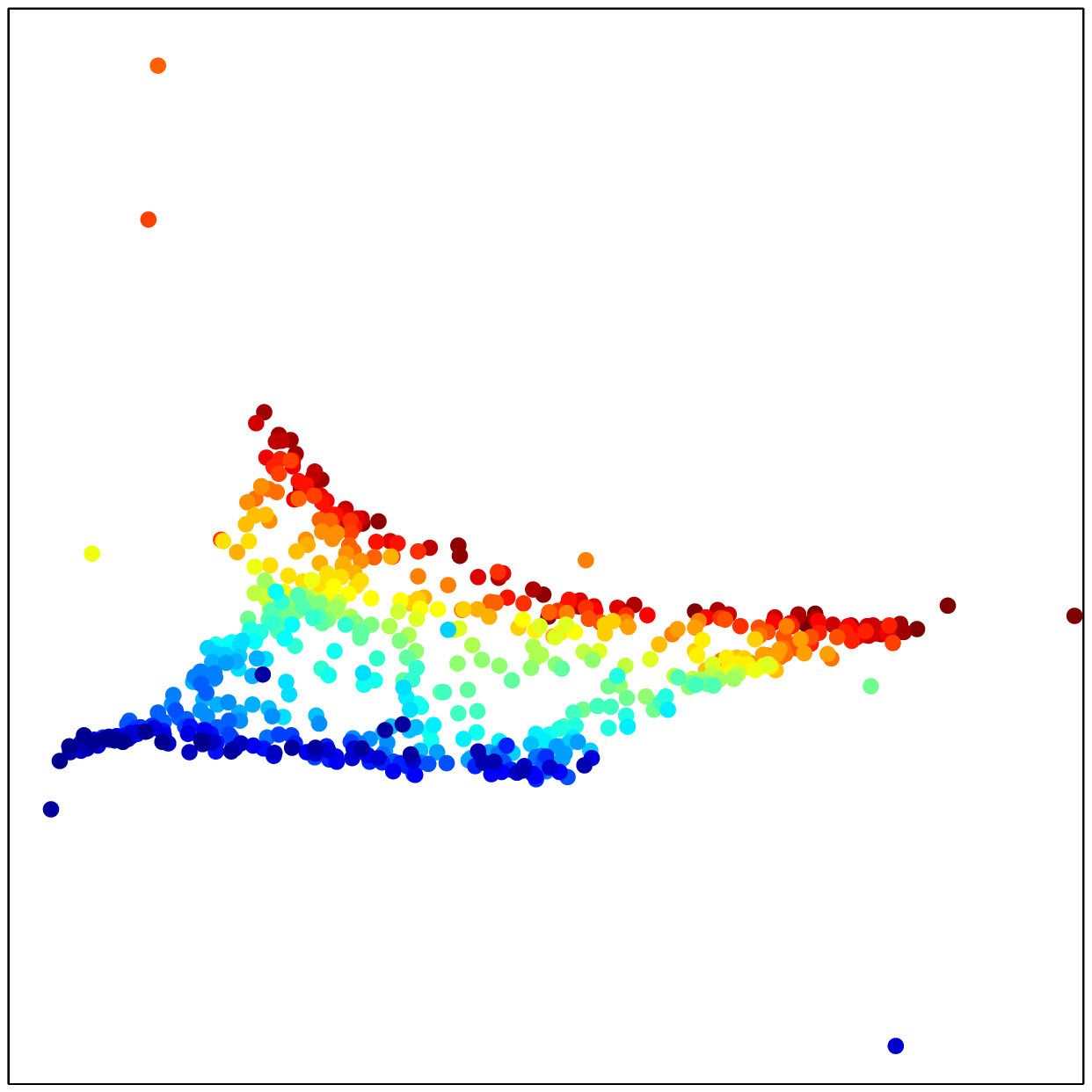}} &
{\includegraphics[width=0.3\hsize]{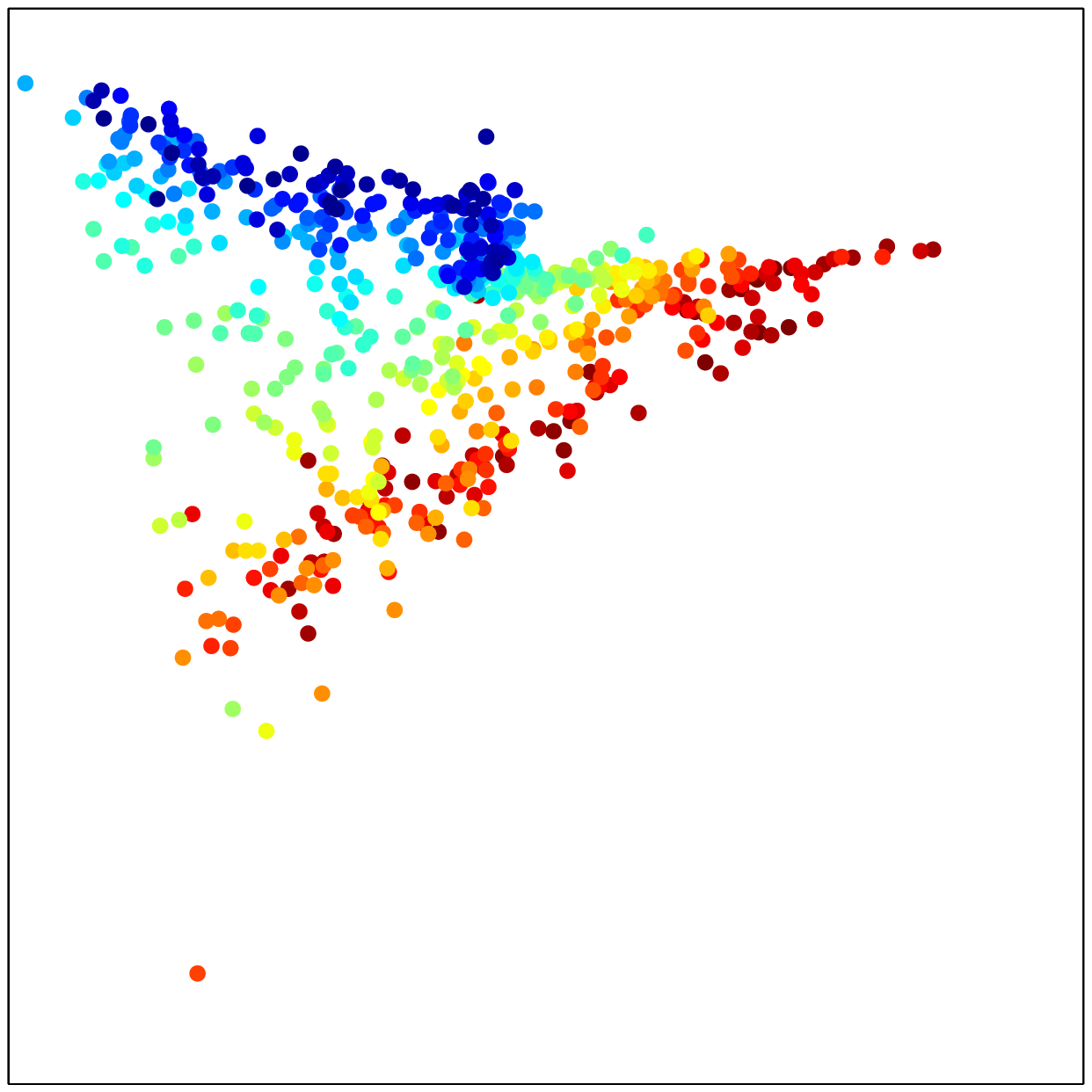}} \\
(i) & (ii) \\
{\includegraphics[width=0.3\hsize]{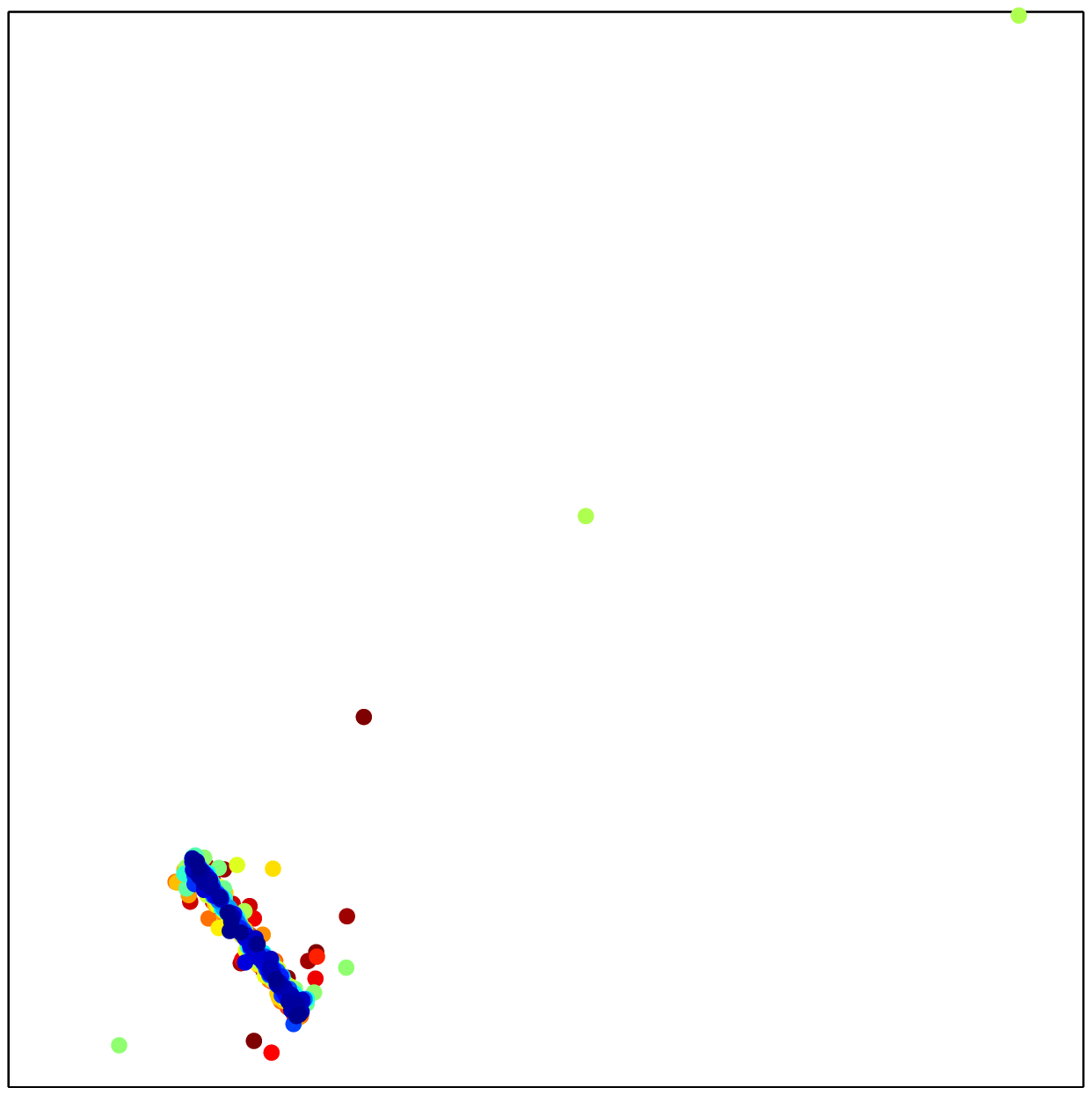}} &
{\includegraphics[width=0.3\hsize]{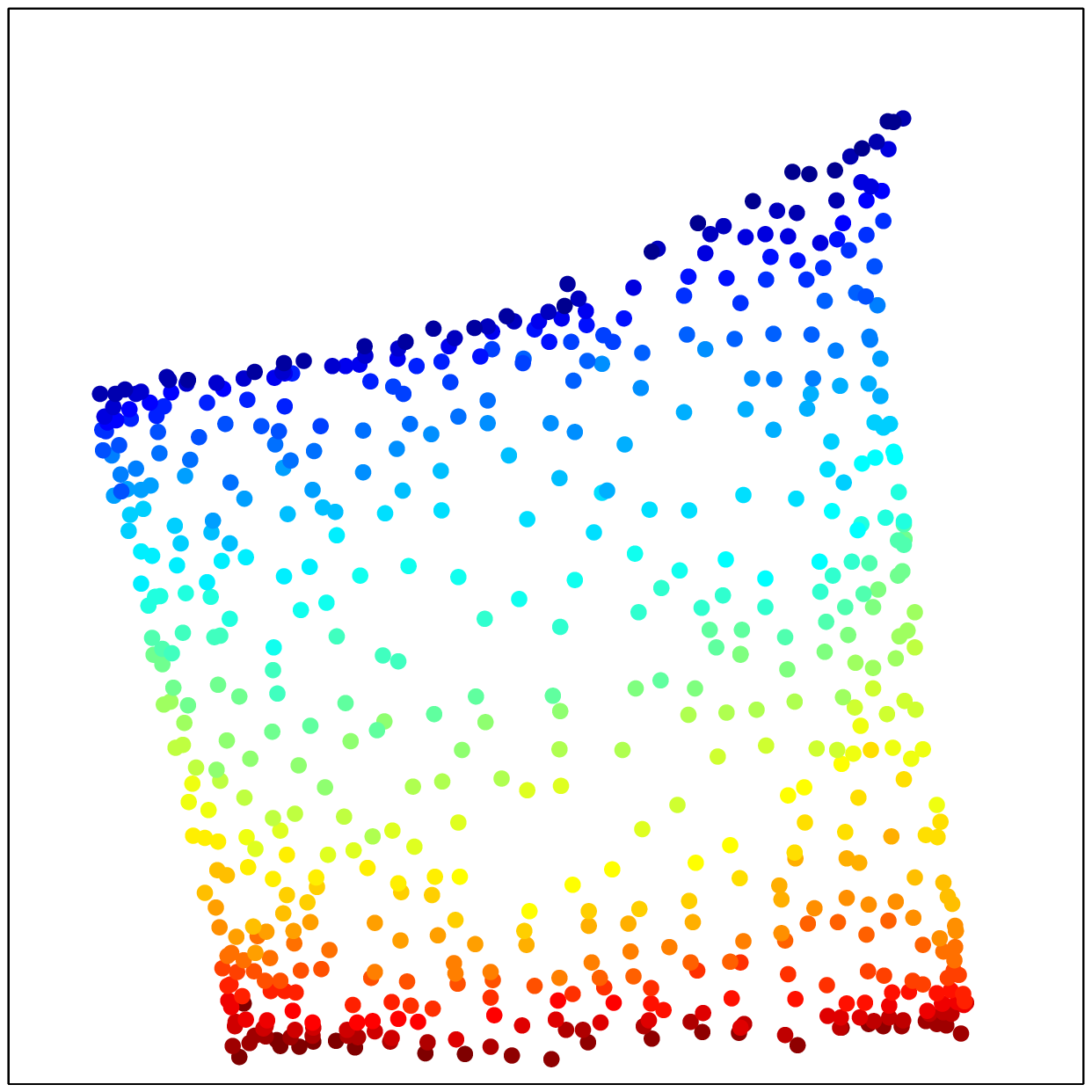}} \\
(iii) & Joint manifold
\end{tabular}
\caption{\small\sl Results of Isomap applied to the translating ellipse image data sets.}
\label{fig:ellipselearn}
\end{figure}

\section{Joint manifolds for efficient dimensionality reduction} \label{sec:defjam}

We have shown that joint manifold models for data ensembles can significantly 
improve the performance on a variety of signal processing tasks, where 
performance is quantified using metrics like probability of error for detection and 
accuracy for parameter estimation and manifold learning.  In particular, we have 
observed that performance tends to improve exponentially fast as we increase
the number of component manifolds $J$.  However, we have ignored
that when $J$ and the ambient dimension of the manifolds $N$ become
large, the dimensionality of the joint manifold --- $JN$
--- may be so large that it becomes impossible to perform any meaningful
computations. Fortunately, we can transform the data into a more
amenable form via the method of {\em random projections}: it has been shown
that the essential structure of a $K$-dimensional manifold with
condition number $1 / \tau$ residing in $\reals^N$ is approximately
preserved under an orthogonal projection into a random subspace of
dimension $O(K \log (N/ \tau)) \ll N$~\cite{BaraniukWakin06}.
This result can be leveraged to
enable efficient design of inference applications, such as
classification using multiscale navigation~\cite{SmashedFilterICIP},
intrinsic dimension estimation, and manifold
learning~\cite{chmwrbnips}.

We can apply this result individually for each sensor acquiring
manifold-modeled data.  Suppose $N$-dimensional data from $J$
component manifolds is available. If $N$ is large, then the above result
would suggest that we project each manifold into a lower-dimensional
subspace.  By collecting this data at a central location, we
would obtain $J$ vectors, each of dimension $O(K \log N)$, so that
we would have $O(JK \log N)$ total measurements.  This approach,
however, essentially ignores the {\em joint} manifold structure
present in the data.
If we instead view the data
as arising from a $K$-dimensional joint manifold residing in
$\reals^{JN}$ with bounded condition number as given by Theorem
\ref{thm:cond_jam}, we can then project the joint data into a subspace
which is {\em only logarithmic} in $J$ as well as the {\em largest}
condition number among the components, and still approximately
preserve the manifold structure. This is formalized in the following
theorem.
\begin{thm}
Let $\manifold^*$ be a compact, smooth, Riemannian joint
manifold in a $JN$-dimensional space with condition number
$1/\tau^*$. Let $\Phi$ denote an orthogonal linear mapping from
$\manifold^*$ into a random $M$-dimensional subspace of
$\reals^{JN}$. Let $M \geq O(K \log (JN/ \tau^*)/\epsilon^2)$. Then,
with high probability, the geodesic and Euclidean distances between
any pair of points on $\manifold^*$ are preserved up to distortion
$\epsilon$ under the linear transformation $\Phi$.
\label{thm:smashjam}
\end{thm}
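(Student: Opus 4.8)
The plan is to obtain this as a direct consequence of the manifold Johnson--Lindenstrauss embedding result of \cite{BaraniukWakin06}, applied to $\manifold^*$ regarded as a submanifold of $\reals^{JN}$. That theorem guarantees that a random orthoprojection onto an $M$-dimensional subspace preserves all pairwise geodesic and Euclidean distances on a compact $K$-dimensional Riemannian submanifold of $\reals^{n}$ (here $n = JN$) up to multiplicative distortion $\epsilon$, with probability at least $1-\rho$, provided
\[
M = O\!\left( \frac{K \, \log\!\big( n \cdot V \cdot R \cdot \tau^{-1} \big)\, \log(1/\rho)}{\epsilon^2} \right),
\]
where $V$ is the $K$-dimensional volume of the manifold, $1/\tau$ its condition number, and $R$ its geodesic covering regularity. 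So the first step is simply to check that $\manifold^*$ meets these hypotheses: by Proposition~\ref{prop:manifold} it has intrinsic dimension $K$; by assumption it is compact, smooth, and Riemannian; and its ambient dimension is $n = JN$. Its condition number $1/\tau^*$ is finite for any compact smooth manifold, and by Theorem~\ref{thm:cond_jam} it satisfies $\tau^* \ge \min_{j} \tau_j$ when the defining maps $\psi_j$ are diffeomorphisms, so the term $\log(1/\tau^*)$ in the bound above causes no trouble.

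The quantities that still require attention are the volume $V^*$ and the geodesic covering regularity $R^*$ of $\manifold^*$, since these are hidden inside the big-$O$ in the statement and, if they grew like $\mathrm{poly}(J)$ in the exponent (e.g.\ $\mathrm{diam}^{JK}$), they would secretly reintroduce a linear-in-$J$ term. The point is that $\manifold^*$ is $K$-dimensional, not $JK$-dimensional, so this cannot happen. Concretely, I would invoke Theorem~\ref{thm:geod}: in the isometric case, geodesic distances on $\manifold^*$ are exactly those on $\manifold_1$ scaled by $\sqrt{J}$, whence $V^* = J^{K/2} V_1$, and a geodesic $T$-ball on $\manifold^*$ corresponds to a geodesic $(T/\sqrt{J})$-ball on $\manifold_1$, so the covering number of $\manifold^*$ at scale $T$ equals that of $\manifold_1$ at scale $T/\sqrt{J}$, giving $R^* = R_1$. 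The factor $J^{K/2}$ in $V^*$ contributes only an additive $\tfrac{K}{2}\log J$ inside the logarithm, which is absorbed into $O(K\log(JN/\tau^*))$; the remaining $\log V_1$ and $\log R_1$ are constants in $J$ and $N$ and are likewise absorbed. In the general diffeomorphic (non-isometric) case one simply carries $V^*$ and $R^*$ along as the finite geometric constants of the compact manifold $\manifold^*$ and folds them into the implied constant.

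Putting the pieces together: $\manifold^*$ is a compact $K$-dimensional Riemannian submanifold of $\reals^{JN}$ with condition number $1/\tau^*$, and with the volume and covering-regularity factors under control the bound of \cite{BaraniukWakin06} reduces to $M \ge O(K\log(JN/\tau^*)/\epsilon^2)$, which is exactly the stated requirement. Applying that theorem then yields, with high probability, simultaneous $\epsilon$-distortion bounds for every pair of points on $\manifold^*$, for both geodesic and Euclidean distance, which completes the argument.

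I expect the main (indeed essentially the only) obstacle to be the bookkeeping in the second paragraph: justifying that replacing the ambient dimension $N$ by $JN$ and the condition number $1/\tau$ by $1/\tau^*$ is legitimate, i.e.\ that the Baraniuk--Wakin bound's implicit dependence on the volume and covering regularity of the joint manifold remains $O(K\log J)$ rather than $O(JK)$. Once that is dispatched via Theorem~\ref{thm:geod} (and Theorem~\ref{thm:cond_jam} for the $\tau^*$ term), everything else is a direct citation.
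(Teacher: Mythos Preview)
Your approach is correct and matches the paper's: the paper gives no explicit proof of this theorem, treating it as an immediate instantiation of the Baraniuk--Wakin result \cite{BaraniukWakin06} with ambient dimension $JN$ in place of $N$ and condition number $1/\tau^*$ in place of $1/\tau$, justified by the preceding discussion and Theorem~\ref{thm:cond_jam}. Your proposal follows exactly this route but is in fact more careful than the paper, since you explicitly track the volume $V^*$ and covering-regularity $R^*$ factors hidden in the Baraniuk--Wakin bound and verify (via Theorem~\ref{thm:geod}) that they contribute only $O(K\log J)$ and so are absorbed into the stated big-$O$; the paper simply suppresses these constants without comment.
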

Thus, we obtain a faithful approximation of our manifold-modeled
data that is only $O(K \log JN)$ dimensional. This represents a
significant improvement over performing separate dimensionality 
reduction on each component manifold. 

Importantly, the linear nature of the random projection step can
be utilized to perform dimensionality reduction in a distributed
manner, which is particularly useful in applications when data
transmission is expensive. As an example, consider a network of $J$
sensors observing an event that is governed by a $K$-dimensional
parameter. Each sensor records a signal $x_j \in \reals^N, 1\ \le j
\le J$; the concatenation of the signals $x =
[x_1^T~x_2^T~\ldots~x_j^T]^T$ lies on a $K$-dimensional joint
manifold $\manifold^* \subset \reals^{JN}$.
Since the required random projections are linear, we can
take local random projections of the observed signals at each
sensor, and still calculate the {\em global} measurements of
$\manifold^*$ in a distributed fashion. Let each sensor obtain its
measurements $y_j = \Phi_j x_j$, with the matrices $\Phi_j \in
\reals^{M \times N}, 1 \le j \le J$. Then, by defining the $M\times
JN$ matrix $\Phi = [\Phi_1 \ldots \Phi_J]$, our global projections
$y^* = \Phi^* x^*$ can be obtained by
\begin{eqnarray}
y^*&=& \Phi^* x^* \nonumber \\
&=& \Phi^* [x_1^T \quad x_2^T \quad \ldots \quad x_J^T]^T \nonumber \\
&=& [\Phi_1 \quad \Phi_2 \quad \ldots \quad \Phi_J] [x_1^T \quad x_2^T \quad \ldots \quad x_J^T]^T \nonumber \\
&=& \Phi_1 x_1 + \Phi_2 x_2 + \ldots + \Phi_J x_J . \nonumber
\end{eqnarray}
Thus, the final measurement vector can be obtained by simply {\em
adding independent random projections} of the signals acquired by
the individual sensors. This method enables a novel scheme for {\em compressive, multi-modal data fusion};
in addition, the number of random projections required by this scheme 
is only {\em logarithmic} in the number of sensors $J$. Thus, the joint 
manifold framework naturally lends itself to a network-scalable data 
aggregation technique for communication-constrained applications.
\section{Discussion}
\label{sec:conc}

Joint manifolds naturally capture the structure present in a variety
of signal ensembles that arise from multiple observations of a
single event controlled by a small set of global parameters. We have
examined the properties of joint manifolds that are relevant to
real-world applications, and provided some basic examples that
illustrate how they improve performance and help reduce complexity.

We have also introduced a simple framework for dimensionality reduction
for joint manifolds that employs independent random projections from
each signal, which are then added together to obtain an accurate
low-dimensional representation of the data ensemble.
This distributed dimensionality reduction technique resembles the
acquisition framework proposed in compressive sensing
(CS)~\cite{Donoho04A, Candes04C}; in fact, prototypes of inexpensive
sensing hardware~\cite{spc,laska06} that can {\em directly acquire}
random projections of the sensed signals have already been built.
Our fusion scheme can be directly applied to the data acquired by
such sensors. Joint manifold fusion via random projections, like CS,
is {\em universal} in the sense that the measurement process is not
dependent on the specific structure of the manifold. Thus, our
sensing techniques need not be replaced for these extensions; only
our underlying models (hypotheses) are updated.

The richness of manifold models allows for the joint manifold
approach to be successfully applied in a larger class of problems
than principal component analysis and other linear model-based signal processing techniques.
In fact, joint manifolds can be immediately applied in signal
processing tasks where manifold models are common, such as
detection, classification, and parameter estimation. When these
tasks are performed in a sensor network or array, and random
projections of the captured signals can be obtained, joint manifold
techniques provide improved performance by leveraging the
information from all sensors simultaneously.

\bibliographystyle{unsrt}

\end{document}